\newcommand{\boldup}{$\boldsymbol{\uparrow}$}
\newcommand{\bolddown}{$\boldsymbol{\downarrow}$}
\definecolor{Red}{HTML}{aa0000}
\definecolor{Green}{HTML}{00aa00}
\newtheorem{thm}{Theorem}
\newtheorem{lem}{Lemma}
\newtheorem{defn}{Definition}
\def\BibTeX{{\rm B\kern-.05em{\sc i\kern-.025em b}\kern-.08em
    T\kern-.1667em\lower.7ex\hbox{E}\kern-.125emX}}
\newcommand{\z}{\textcolor{gray!45}{0.00}}  
\renewcommand{\arraystretch}{1.08}
\begin{document}

\title{A Deep Latent Factor Graph Clustering\\ with Fairness-Utility Trade-off Perspective}

\author{
\IEEEauthorblockN{ Siamak Ghodsi \href{https://orcid.org/0000-0002-3306-4233}{\includegraphics[scale=0.012]{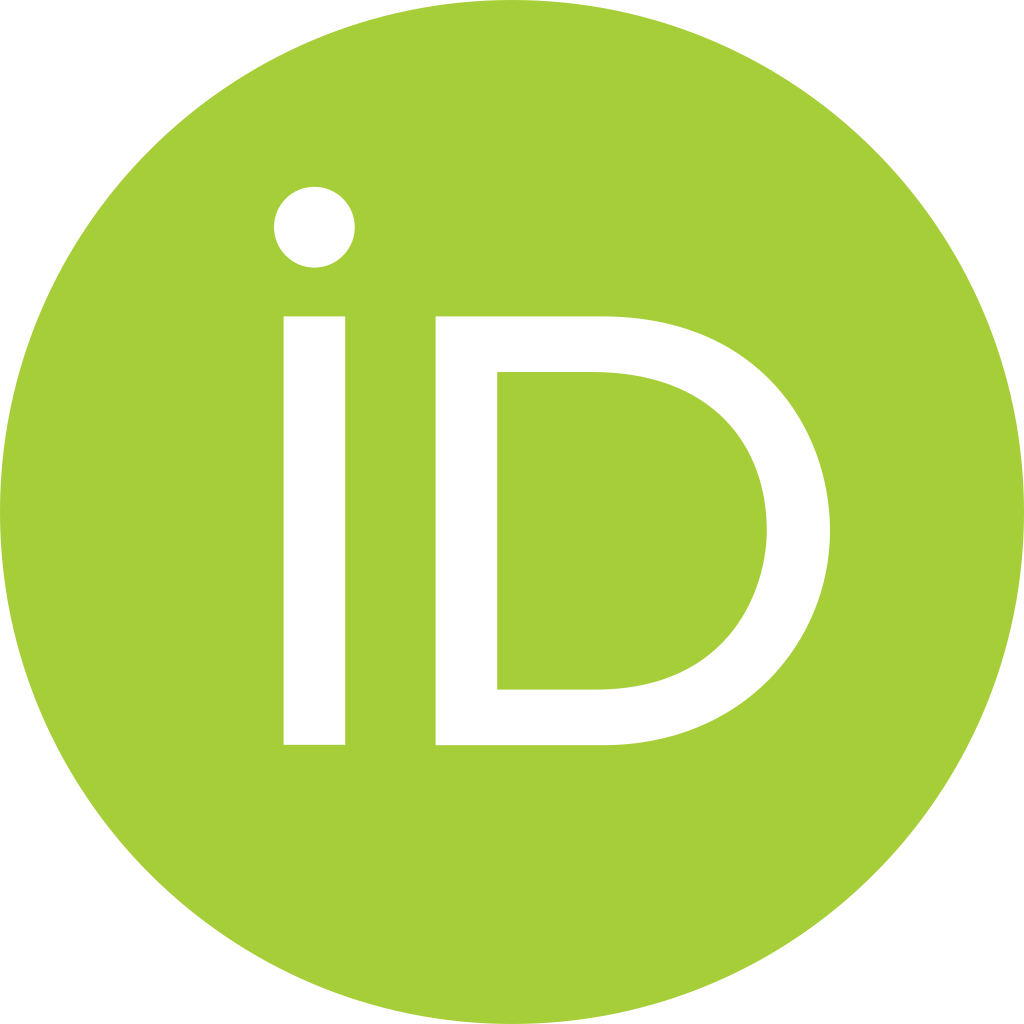}}}
\IEEEauthorblockA{\textit{L3S Research Center} \\
Hannover, Germany \\
ghodsi[at]l3s.de}
\and
\IEEEauthorblockN{Amjad Seyedi \href{https://orcid.org/0000-0003-2718-7146}{\includegraphics[scale=0.012]{orcid.png}}}
\IEEEauthorblockA{\textit{University of Mons} \\
Mons, Belgium \\
seyedamjad.seyedi[at]umons.ac.be}
\and
\IEEEauthorblockN{Tai Le Quy  \href{https://orcid.org/0000-0001-8512-5854}{\includegraphics[scale=0.012]{orcid.png}}}
\IEEEauthorblockA{\textit{University of Koblenz} \\
Koblenz, Germany \\
tailequy[at]uni-koblenz.de}
\and
\IEEEauthorblockN{Fariba Karimi \href{https://orcid.org/0000-0002-0037-2475}{\includegraphics[scale=0.012]{orcid.png}}}
\IEEEauthorblockA{\textit{TU Graz} \\
Graz, Austria \\
karimi[at]csh.ac.at}
\and
\IEEEauthorblockN{Eirini Ntoutsi \href{https://orcid.org/0000-0001-5729-1003}{\includegraphics[scale=0.012]{orcid.png}}}
\IEEEauthorblockA{\textit{Bundeswehr University} \\
Munich, Germany \\
eirini.ntoutsi[at]unibw.de}

}

\maketitle

\begin{abstract}
Fair graph clustering seeks partitions that respect network structure while maintaining proportional representation across sensitive groups, with applications spanning community detection, team formation, resource allocation, and social network analysis. Many existing approaches enforce rigid constraints or rely on multi-stage pipelines (e.g., spectral embedding followed by $k$-means), limiting trade-off control, interpretability, and scalability. We introduce \emph{DFNMF}, an end-to-end deep nonnegative tri-factorization tailored to graphs that directly optimizes cluster assignments with a soft statistical-parity regularizer. A single parameter $\lambda$ tunes the fairness--utility balance, while nonnegativity yields parts-based factors and transparent soft memberships. The optimization uses sparse-friendly alternating updates and scales near-linearly with the number of edges. Across synthetic and real networks, DFNMF achieves substantially higher group balance at comparable modularity, often dominating state-of-the-art baselines on the Pareto front. The code is available at \url{https://github.com/SiamakGhodsi/DFNMF.git}. 
\end{abstract}

\begin{IEEEkeywords}
		  Trustworthy ML, 
            Fair Graph Clustering,
            Community Detection,  
            Heterogen. Networks,
            Deep Factorization
\end{IEEEkeywords}

\section{Introduction} \label{sec: intro}
Fair and trustworthy machine learning has advanced rapidly over the last decade~\cite{ntoutsi2020bias, DBLP:journals/widm/QuyRIZN22, DBLP:conf/bigdataconf/ZhaoFLGBSK24, DBLP:conf/ecir/TahmasebiME25}, yet it remains underexplored in graph learning—especially graph clustering~\cite{DBLP:journals/tkde/DongMWCL23, DBLP:journals/ethicsit/AlvarezCEFFFGMPLRSSZR24}.
Graph clustering partitions a network into cohesive groups and underpins applications such as community detection, team formation, resource allocation, and social network analysis \cite{DBLP:conf/ijcai/JuY0L00024}. Classical approaches maximize utility (e.g., modularity) or minimize cuts to uncover natural network structures, yet many real-world applications require balancing structural cohesion with demographic fairness.

Consider academic collaboration networks where funders increasingly mandate diverse team compositions. Purely structure-driven clustering may recover prolific yet homogeneous groups that fail diversity requirements. Similarly, educational institutions forming student project teams must balance social connections with equitable assignments (Figure~\ref{fig:teaser_fig}). These scenarios demand fair graph clustering—modifying natural community boundaries to achieve demographic balance while preserving meaningful network structure~\cite{DBLP:conf/pakdd/QuyFN23}.

\begin{figure}[!ht]%
    \centering
    \includegraphics[width=\linewidth]{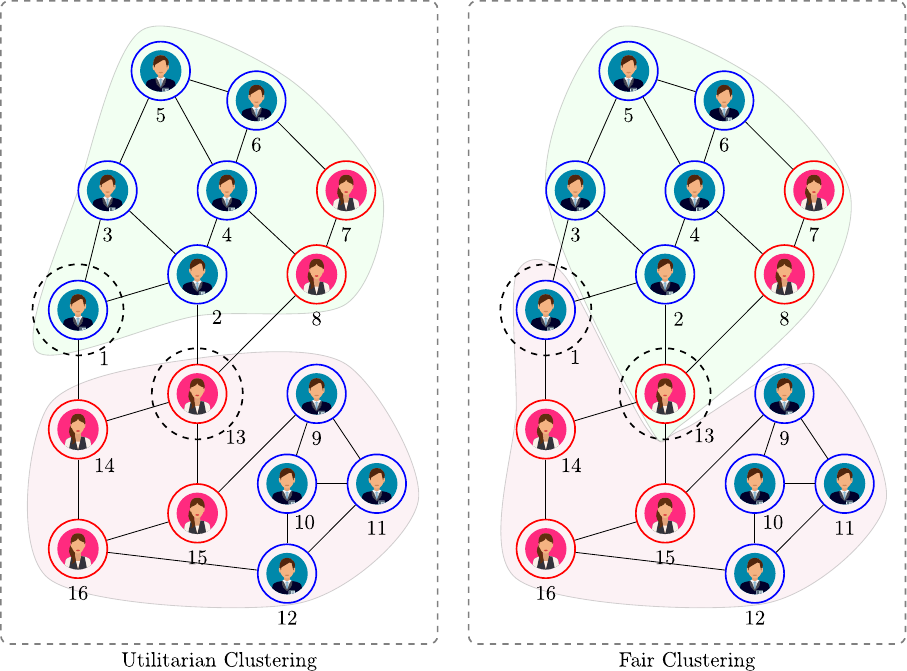}
    \caption{
     Fair clustering of a 16-node graph (10 \textbf{M}ale, 6 \textbf{F}emale) into two equal-sized clusters. Left: Utilitarian clustering yields a structure-driven partition with a 6M:2F distribution for green and 4M:4F for lavender cluster, resulting in gender imbalance. Right: Fair clustering achieves a balanced 5M:3F distribution in both clusters by swapping memberships of nodes 1 and 13.}
    \label{fig:teaser_fig}
\end{figure}

Two lines of research relate to our work. \textbf{(i) Fair clustering for i.i.d.\ data}: develop demographic-parity constraints for $k$-center/median/means via fairlets and related schemes~\cite{DBLP:conf/nips/Chierichetti0LV17,DBLP:conf/aistats/BateniCEL24}, alongside supervised fair GNNs for labeled settings~\cite{DBLP:conf/kdd/DongKTL21,DBLP:journals/tkdd/ChenRPTWYKDA24,DBLP:conf/aaai/YangLYS24}. These methods assume metric i.i.d.\ data or require labels, making them unsuitable for graph-structured clustering. 
\textbf{(ii) Fair graph clustering}: primarily extends spectral clustering with hard demographic constraints \cite{DBLP:conf/icml/KleindessnerSAM19,DBLP:conf/nips/GuptaD22,wang2023scalable}, but eigen-decompositions and post-hoc rounding (e.g., $k$-means) limit end-to-end control and scalability. Recent matrix-factorization approaches offer more flexibility, though contrastive NMF variants can struggle at larger scales \cite{DBLP:conf/pakdd/GhodsiSN24}.

To address these limitations, we propose \textbf{DFNMF}, an end-to-end deep NMF framework for fair graph clustering. DFNMF integrates balanced fairness constraints—enforcing proportional group representation—directly into the clustering objective, enabling explicit utility–fairness trade-offs via a single parameter \(\lambda\) without post-processing. Deep tri-factorization captures multi-level network structure while maintaining scalability through sparse implementations. To our knowledge, this is the first deep hierarchical NMF model with integrated fairness designed specifically for graph clustering. Our contributions are:
\begin{itemize}
  \item \textbf{End-to-end fairness integration:} Joint representation learning and fairness enforcement without multi-stage pipelines or post-processing.
  \item \textbf{Tunable trade-off control:} Single parameter \(\lambda\) enables precise utility–fairness balance adjustment.
  \item \textbf{Deep hierarchical architecture:} Tri-factorization robust to graph sparsity and group-size heterogeneity.
  \item \textbf{Scalable sparse implementation:} CSR-based operations with efficient alternating updates.
  \item \textbf{Comprehensive evaluation:} Synthetic and real networks with statistical analysis, Pareto studies, and ablations.
\end{itemize}

The remainder of this paper is structured as follows: Section~\ref{sec: rel_works} reviews the current literature and highlights the existing gaps. Section~\ref{sec: background} formulates the problem and provides necessary preliminaries, including theoretical foundations of NMF. Section~\ref{sec: proposed} details our proposed DFNMF model. Section~\ref{sec:experiments} presents the experimental results. Finally, Section~\ref{sec:conclusion} concludes the paper and points out future research directions.

\section{Related Works}\label{sec: rel_works}
Fairness-aware graph clustering aims to mitigate bias propagation by enforcing demographic constraints during network partitioning. While graph fairness has been studied extensively~\cite{DBLP:journals/tkde/DongMWCL23, DBLP:conf/ijcai/JuY0L00024, DBLP:conf/ijcai/WangHW0SOC0Y21}, relatively few works specifically tackle fairness in unsupervised graph clustering. The literature can be broadly divided into (1) methods primarily targeting independent and identically distributed (iid) data, and (2) those tailored specifically for graph-structured data.

\subsection{Fair Clustering for iid Data}
Fairness-aware clustering methods initially emerged for independent and identically distributed (iid) tabular data. Chierichetti et al.~\cite{DBLP:conf/nips/Chierichetti0LV17} introduced ``fairlets'' to enforce demographic parity in $k$-center and $k$-median clustering. Subsequent works extended fairness constraints to $k$-means and related objectives~\cite{DBLP:conf/aistats/BateniCEL24}. Some approaches construct similarity graphs from iid data to enable graph-based fairness reasoning~\cite{DBLP:conf/ewaf/GhodsiN23, DBLP:journals/corr/abs-2206-11436}. However, these methods rely on strong geometric or metric assumptions and cannot natively handle the complex relational dependencies inherent in real-world networks~\cite{DBLP:journals/tkde/DongMWCL23, 10992538/esmaili}.

While these methods address unsupervised fairness, they are fundamentally limited for graph clustering applications. Their experimental evaluations typically use simplified datasets that inadequately reflect the structural complexities of networked data, where connections themselves carry semantic meaning beyond mere similarity.

\subsection{Fair Graph Clustering}
Recent studies directly extend spectral clustering to incorporate fairness constraints for graph data. Kleindessner et al.~\cite{DBLP:conf/icml/KleindessnerSAM19} introduced demographic fairness constraints into spectral decompositions. Subsequent works improved computational efficiency using relaxation techniques and augmented Lagrangian methods \cite{DBLP:conf/nips/GuptaD22,wang2023scalable,DBLP:conf/ecai/Li0M23}. However, these spectral methods face fundamental limitations: rigid fairness constraints, non-end-to-end clustering requiring post-processing (e.g., $k$-means), discretization approximations, and limited interpretability.

Graph embedding-based alternatives such as the method by Dong et al.~\cite{DBLP:conf/kdd/DongKTL21} apply individual fairness through rank-based alignment, yet their embedding-driven strategy risks propagating biases from sensitive attributes. Ghodsi et al.~\cite{DBLP:conf/pakdd/GhodsiSN24} proposed asymmetric NMF with contrastive fairness regularization, offering improved interpretability but limited scalability due to computationally intensive pairwise contrastive terms.

Supervised graph methods have also been explored for fairness-aware tasks. Ranking-based fairness models~\cite{DBLP:conf/kdd/DongKTL21}, fair GNN architectures~\cite{DBLP:journals/tkdd/ChenRPTWYKDA24}, and sensitive attribute neutralization strategies~\cite{DBLP:conf/aaai/YangLYS24} show promise in supervised settings. However, these approaches require labeled data and may suffer from proxy-bias propagation through learned embeddings, limiting their applicability to unsupervised community discovery tasks.


GNN-based clustering methods like DMoN~\cite{DBLP:journals/jmlr/TsitsulinPPM23} provide scalable alternatives but lack integrated fairness constraints. While effective for supervised tasks, adapting such methods to incorporate fairness remains an open challenge.

Table~\ref{table:taxonomy} summarizes existing methods across five key criteria. Most approaches either face scalability limitations, enforce rigid fairness constraints, or provide limited interpretability. Existing approaches trade off flexibility (softness), end-to-end optimization, and scalability—gaps our DFNMF addresses.

Clearly, existing methods either face scalability constraints, rigid fairness enforcement, or limited interpretability.
\setlength{\tabcolsep}{2pt}
\begin{table}[!ht]
    \centering
    \resizebox{\linewidth}{!}{ 
    \begin{tabular}{l c c c c c c c}
        \toprule
        \textbf{Method}  & 
        \textbf{Soft?} &
        \textbf{End-to-End} &
        \textbf{Complexity} & \textbf{Scalability} & \textbf{Interpret.} \\
        \midrule






        


        \cite{DBLP:journals/jmlr/TsitsulinPPM23} DMoN    & \textcolor{Green}{\CheckmarkBold} & \textcolor{Green}{\CheckmarkBold} &     $     O(nk^2 + |E|)$ & Large \textcolor{Green}\boldup & Low \textcolor{Red}{\bolddown}\\

        \cite{DBLP:conf/icml/KleindessnerSAM19} FSC   & \textcolor{Red}{\XSolidBrush}& \textcolor{Red}{\XSolidBrush} &   $    O(n^{3})$ & Small \textcolor{Red}{\bolddown} & Low \textcolor{Red}{\bolddown}\\
        
        \cite{wang2023scalable} sFSC    &  \textcolor{Red}{\XSolidBrush} &  \textcolor{Red}{\XSolidBrush} &       $    O(|E|+n(h^2+k^2))$ & Medium & Low\textcolor{Red}{\bolddown}\\

        \cite{DBLP:conf/nips/GuptaD22} iFSC   &  \textcolor{Red}{\XSolidBrush} & \textcolor{Red}{\XSolidBrush}	 &  $    O(n^{3})$ & Small \textcolor{Red}{\bolddown} & Low \textcolor{Red}{\bolddown}\\
        
        \cite{DBLP:conf/kdd/DongKTL21} GNN-FSC  &   \textcolor{Red}{\XSolidBrush} & \textcolor{Red}{\XSolidBrush}	 &  $    O(n^3)$ & Small \textcolor{Red}{\bolddown} & Low \textcolor{Red}{\bolddown} \\

        \cite{DBLP:conf/ecai/Li0M23} FNM-SC   &  \textcolor{Red}{\XSolidBrush} & \textcolor{Red}{\XSolidBrush}	 &  $    O(n^{4.5}k^{4.5}|E|)$ & Small \textcolor{Red}{\bolddown} & Low \textcolor{Red}{\bolddown} \\
        
        \cite{DBLP:conf/pakdd/GhodsiSN24} iFNMTF   & \textcolor{Green}{\CheckmarkBold} &  \textcolor{Green}{\CheckmarkBold}	 &     $    O(|E|k + nk^2 + k^3)$ & Medium & High \textcolor{Green}\boldup\\

        \midrule

        DFNMF (ours)  & \textcolor{Green}{\CheckmarkBold} &   \textcolor{Green}{\CheckmarkBold} &  $    O(Tpk|E|)$ & Large \textcolor{Green}\boldup & High \textcolor{Green}\boldup\\
        
        \bottomrule
    \end{tabular}
    }
    \caption{Representative methods for fair graph clustering. \emph{Soft?} indicates soft (regularized) vs.\ hard constraints; \emph{End-to-End} denotes no post-hoc rounding; complexity is measured in graph size $n$, edges $|E|$, clusters $k$, and iterations $T$, and \emph{Interpret.} refers to parts-based transparency of assignments/factors.}

    \label{table:taxonomy}
\end{table}


\section{Problem Formulation \& Preliminaries} \label{sec: background}



\subsection{Problem Definition}
Consider an undirected graph $\mathcal{G} = (V, E)$ with adjacency matrix $\bm{A}\in\mathbb{R}^{n\times n}$ encoding edge connections, where $a_{ij}>0$ indicates a positive edge between nodes $v_i, v_j\in V$, and $a_{ij}=0$ otherwise (assuming no self-loops, thus $a_{ii}=0$). Suppose the node set $V$ is partitioned into $m$ disjoint demographic groups based on sensitive attributes, such that $V = \dot\cup_{s\in[m]} V_s$.
Our goal is to find a clustering $C = C_1 \dot\cup \ldots \dot\cup C_k$ into $k$ clusters that maximizes structural cohesion while satisfying demographic balance (Definition~\ref{def:balance}), ensuring fair representation.

\subsection{Demographic Balance}

The \emph{balance} criterion, originally introduced by Chierichetti et al.~\cite{DBLP:conf/nips/Chierichetti0LV17}, ensures that each cluster maintains demographic proportions similar to the global distribution. This concept, first applied to $k$-median, was later adapted to spectral clustering in~\cite{DBLP:conf/icml/KleindessnerSAM19, wang2023scalable}. While initially formulated for iid data, we extend this principle to our context, where demographic balance must be preserved over node partitions, formalized as follows:
\begin{defn}[Generalized Demographic Balance]\label{def:balance}
Given a partitioning of the vertex set $V$ into $k \geq 2$ communities, the clustering is said to be \emph{fair} with respect to a partition into demographic groups $s \in [m]$ if, in each community $C_l$, the proportion of group-$s$ nodes matches its global share:
\begin{equation}\label{eq:org_bal}
\frac{|V_s \cap C_l|}{|C_l|} = \frac{|V_s|}{|V|}, \quad \forall\, s\in\{1, \ldots, m\},\ l\in\{1, \ldots, k\}.
\end{equation}
\end{defn}

Here, $V = \{v_1, v_2, \ldots, v_n\}$ is the set of graph nodes, $|V_s|$ is the number of nodes belonging to demographic group $s$, and $|V| = n$ is the total number of nodes. This constraint operationalizes \emph{statistical parity} for clustering and forms the foundation of the fairness regularizer introduced in Section~\ref{sec: proposed}.

\subsection{\textbf{Nonnegative Matrix Factorization (NMF)}}\label{sec:nmf}
NMF is widely used due to its interpretability, versatility, and adaptability to constraints, performing a low-rank approximation of data \cite{lee1999learning}. Its nonnegativity constraints yield parts-based, sparse representations that are inherently interpretable, making NMF particularly suitable for domains like text mining, image analysis, and bioinformatics \cite{gillis2020nonnegative, DBLP:conf/acml/SeyediGAJM19}. Formally, given data matrix $\bm{X}\in \mathbb{R}^{m\times n}$, NMF seeks basis matrix $\bm{W}\in \mathbb{R}^{m\times k}$ and representation matrix $\bm{H}\in \mathbb{R}^{k\times n}$ by solving:
\begin{align}\label{eq:basicnmf}
\min_{\bm{W},\bm{H}\geq 0}\lVert\bm{X}-\bm{WH}\rVert^2_F,
\end{align}
where $\lVert\bm{A}\rVert_F=\sqrt{\sum_{ij}A_{ij}^2}$ denotes the Frobenius norm.

\subsubsection{\textbf{NMF Tri-Factorization for Graph Clustering}}\label{sec:snmf} 

The general NMF formulation \eqref{eq:basicnmf} is not directly tailored for graph clustering. Hence, Symmetric-NMF~\cite{DBLP:conf/sdm/KuangPD12} was developed specifically for this purpose. Symmetric-NMF factorizes a graph adjacency matrix $\bm{A}\in\mathbb{R}^{n\times n}$ into latent node representations $\bm{H}\in\mathbb{R}^{n\times u}$, encouraging attraction between connected nodes and repulsion otherwise. Formally:
\begin{align}\label{eq:snmf}
\min_{\bm{H}\geq 0}\lVert\bm{A}-\bm{HH}^\top\rVert^2_F.
\end{align}

Extending Symmetric NMF, the NMF Tri-Factorization (NMTF)~\cite{DBLP:conf/ijcai/PeiCS15} introduces cluster-cluster interactions via matrix $\bm{W}$, leading to:
\begin{align}\label{eq:snmtf}
\min_{\bm{H},\bm{W}\geq 0}\lVert\bm{A} -\bm{HWH}^\top\rVert^2_F,
\end{align}
where $\bm{W}$ represents interactions between clusters, improving interpretability and flexibility.

An epistemic comparison of theoretical foundations and differences of NMF-based clustering vs neural network architectures is provided in the 
Appendix~\ref{app:discussion}.



\section{Deep Fair NMF Model} \label{sec: proposed}

This section presents \textbf{DFNMF}, a deep fair tri-factor model for community detection on attributed graphs. A proportional group-balance regularizer is built into the objective, producing soft memberships and \emph{direct} cluster assignments—no post-hoc \(k\)-means. The design (i) performs end-to-end clustering with interpretable nonnegative factors; (ii) couples graph topology with demographic indicators to promote balanced communities; (iii) traverses the utility–fairness trade-off via a single parameter \(\lambda\); and (iv) scales through sparse-friendly alternating updates with shallow pretraining and deep fine-tuning.

\subsection{Soft Balanced Fairness (BF)}\label{sec: bal_constraint}
We encode node–cluster memberships by \(\bm{H}\in\mathbb{R}^{n\times k}\).
For partitioning \(V\) into \(k\) clusters, projecting \(\bm{H}\) by node memberships yields:
\begin{equation}\label{eq:stochastic}
\sum_{i=1}^{n_l} H_{il} = 1,\quad \forall\, l\in\{1,\dots,k\}.
\end{equation}

\noindent This encoding enforces nonnegativity, improving interpretability via parts-based memberships. With column-stochastic normalization~(Eq.~\eqref{eq:stochastic}), each entry \(H_{il}\in[0,1]\) serves as a soft membership weight for node \(v_i\) in cluster \(C_l\).

\begin{defn}[Demographic Group Encoding] \label{def:probabilistic}
    Let the vertex set $V$ be partitioned into $m$ sensitive groups $ V = \dot\cup_{s=1}^m V_s, $
    and define the binary sensitive group indicator for node $v_i$ as
    \[
    g^{(s)}_i = \begin{cases} 
    1, & \text{if } v_i\in V_s,\\[1mm]
    0, & \text{otherwise.}
    \end{cases}
    \]
    We then define the group indicator for each sensitive group as
    \begin{equation}\label{eq:probabilistic}
    \bm{f}^{(s)} = \bm{g}^{(s)} - \frac{|V_s|}{n}\bm{1}_n,\quad s\in \{1,\ldots,m-1\}
    \end{equation}
    where $\bm{1}_n\in\mathbb{R}^n$ is the vector of ones. We obtain the matrix $
    \bm{F} = \big[\bm{f}^{(1)}\; \bm{f}^{(2)}\; \cdots\; \bm{f}^{(m-1)}\big]\in\mathbb{R}^{n\times(m-1)}$ by stacking these $m-1$ vectors as columns. Note: We use $m - 1$ columns to avoid linear dependency, since group proportions sum to one.
\end{defn}

\noindent
The matrix \(\bm{F}\) stacks mean-centered (proportionally centered) group indicators. It will encode demographic balance via the linear condition \(\bm{F}^\top\bm{H}= \bm{0}\) and induce the fairness regularizer \(\|\bm{F}^\top\bm{H}\|_F^2\). This construction is used as Step~1 in Algorithm~\ref{alg:algorithm}.


\begin{defn}[Soft Balanced Fairness] \label{def:fair_nmf}
A partitioning represented by a column-stochastic membership matrix $\bm{H}$ is \emph{fair} with respect to the sensitive groups if, for each cluster $C_l$ and every group $s\in\{1,\ldots,m\}$, the proportion of mass from group $V_s$ in $C_l$ equals the global group proportion. It means that, a clustering $\bm{H}$ is fair if eq.\eqref{eq:org_bal} strictly holds for all clusters and all sensitive groups.
In our soft clustering formulation, this condition is equivalent to the following:
\[
\sum_{i\in V_s} H_{il} = \frac{|V_s|}{n}\quad \text{for all } s\in \{1,\ldots,m-1\}, \ l\in \{1,\ldots,k\}.
\]
Equivalently, stacking the deviations for all groups and clusters, the fairness condition can be formulated as $\bm{F}^\top\bm{H} = \bm{0}$.
\end{defn}

\begin{lem}[Equivalence to Demographic Balance]\label{lem:fairness}
Let $\bm{H}$ be a nonnegative, column-stochastic matrix. Then $\bm{F}^\top\bm{H} = \bm{0}$ of Definition~\ref{def:fair_nmf}, is equivalent to the fairness condition in Eq.~\eqref{eq:org_bal}.

\begin{proof}
Since each column of $\bm{H}$ is normalized by Eq.~\eqref{eq:stochastic}, for any cluster $l$ we have $
\sum_{i=1}^n H_{il} = 1.$
Then, for each sensitive group $s\in[m-1]$ and cluster $l$, the $(s,l)$th entry of $\bm{F}^\top\bm{H}$ is
\begin{align*}
(\bm{F}^\top\bm{H})_{sl} 
& = \sum_{i=1}^n f^{(s)}_i H_{il}
= \sum_{i\in V_s} H_{il} - \frac{|V_s|}{n}\sum_{i=1}^n H_{il} \\
& = \sum_{i\in V_s} H_{il} - \frac{|V_s|}{n}.
\end{align*}

\noindent Thus, $(\bm{F}^\top\bm{H})_{sl}=0$ if and only if $
\sum_{i\in V_s} H_{il} = \frac{|V_s|}{n},
$
which is exactly equal to the fairness condition in Eq.~\eqref{eq:org_bal}. 
\end{proof}
\end{lem}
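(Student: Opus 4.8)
The plan is to prove both directions at once via a direct entrywise computation of $\bm{F}^\top\bm{H}$, then bridge the soft membership formulation to the set-cardinality form of Eq.~\eqref{eq:org_bal}, and finally account for the reduction from $m$ to $m-1$ retained groups. First I would fix a group $s\in\{1,\dots,m-1\}$ and a cluster $l\in\{1,\dots,k\}$ and expand the $(s,l)$ entry using the definition of $\bm{f}^{(s)}$ from Definition~\ref{def:probabilistic}, namely $(\bm{F}^\top\bm{H})_{sl}=\sum_{i=1}^n\bigl(g^{(s)}_i-|V_s|/n\bigr)H_{il}$. Because $g^{(s)}_i$ is the indicator of $v_i\in V_s$, the first term collapses to $\sum_{i\in V_s}H_{il}$, while the second factors as $(|V_s|/n)\sum_{i=1}^n H_{il}$.

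The decisive step is invoking the column-stochastic constraint of Eq.~\eqref{eq:stochastic}, which forces $\sum_{i=1}^n H_{il}=1$ and reduces the baseline term to exactly $|V_s|/n$. Consequently $(\bm{F}^\top\bm{H})_{sl}=0$ holds if and only if $\sum_{i\in V_s}H_{il}=|V_s|/n$, which is precisely the soft balance identity of Definition~\ref{def:fair_nmf}. I would then note that normalization makes the soft cluster size $\sum_i H_{il}$ equal to one, so that $\sum_{i\in V_s}H_{il}$ is the soft analogue of the ratio $|V_s\cap C_l|/|C_l|$; the identity above is therefore the literal transcription of Eq.~\eqref{eq:org_bal} into soft memberships, giving the equivalence for every retained group and cluster.

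The main obstacle — and the point I would handle most carefully — is the mismatch between the $m-1$ columns of $\bm{F}$ and the quantification over all $s\in\{1,\dots,m\}$ in Eq.~\eqref{eq:org_bal}. I would close this by summing the soft balance identities over $s=1,\dots,m-1$ and using that the groups partition $V$, so that $\sum_{s=1}^m\sum_{i\in V_s}H_{il}=\sum_{i=1}^n H_{il}=1$ together with $\sum_{s=1}^m |V_s|/n=1$. Subtracting the two totals forces $\sum_{i\in V_m}H_{il}=|V_m|/n$ without any additional hypothesis, so the dropped $m$-th column encodes no independent constraint. This confirms that $\bm{F}^\top\bm{H}=\bm{0}$ over the $m-1$ retained groups is genuinely equivalent to the fairness condition across all $m$ groups, and it simultaneously justifies the design choice of using $m-1$ columns to avoid linear dependency.

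Everything else is routine algebra; the only conceptual content lies in recognizing that column-stochasticity is what collapses the soft denominator to one (making the bridge to Eq.~\eqref{eq:org_bal} exact) and in the closure argument for the omitted group. I would present the computation as a chain of equalities for a single entry and then quantify over $s$ and $l$, so that both implications follow from the single biconditional $(\bm{F}^\top\bm{H})_{sl}=0 \iff \sum_{i\in V_s}H_{il}=|V_s|/n$.
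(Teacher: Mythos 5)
Your core computation is exactly the paper's: expand $(\bm{F}^\top\bm{H})_{sl}=\sum_{i\in V_s}H_{il}-\frac{|V_s|}{n}\sum_{i=1}^n H_{il}$, invoke the column-stochastic normalization of Eq.~\eqref{eq:stochastic} to collapse the second term to $|V_s|/n$, and read off the biconditional $(\bm{F}^\top\bm{H})_{sl}=0 \iff \sum_{i\in V_s}H_{il}=|V_s|/n$ for each retained group and cluster. Where you go beyond the paper is in the final step: the paper verifies the identity only for $s\in[m-1]$ and then asserts equivalence with Eq.~\eqref{eq:org_bal}, which quantifies over \emph{all} $m$ groups, leaving the dropped group implicit. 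You close that gap explicitly and correctly: summing the $m-1$ identities, using that the groups partition $V$ so $\sum_{s=1}^{m}\sum_{i\in V_s}H_{il}=\sum_{i=1}^{n}H_{il}=1$, and using $\sum_{s=1}^{m}|V_s|/n=1$, you deduce $\sum_{i\in V_m}H_{il}=|V_m|/n$ for free, which simultaneously justifies the paper's remark that the $m$-th column of $\bm{F}$ would be linearly dependent. You also make explicit the soft-to-hard bridge — that column-stochasticity forces the soft cluster size to $1$, so $\sum_{i\in V_s}H_{il}$ plays the role of $|V_s\cap C_l|/|C_l|$ — which the paper handles by fiat through Definition~\ref{def:fair_nmf} (``which is exactly equal to the fairness condition''). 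Strictly speaking this correspondence is definitional rather than provable (Eq.~\eqref{eq:org_bal} is stated for hard partitions, and the soft version is its postulated analogue), so your phrasing ``literal transcription'' is the right level of claim. In short: same route as the paper, but your version is the more complete argument, and the closure for the omitted group is a genuine improvement rather than mere elaboration.
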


\begin{figure*}[ht]
    \centering
    \includegraphics[width=0.9\textwidth]{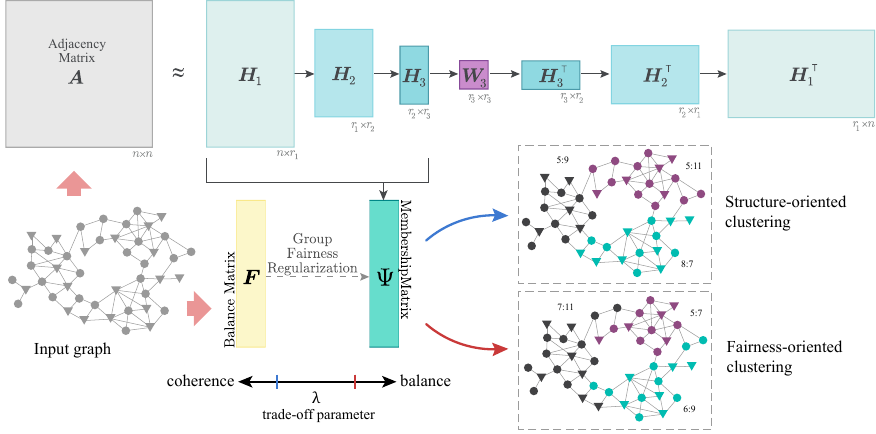}
    \caption{\textbf{DFNMF schematic and example.} A 45-node graph with imbalanced gender distribution of 40\%/60\%(27~\includegraphics[height=1.2ex]{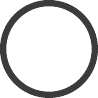}, 18~\includegraphics[height=1.2ex]{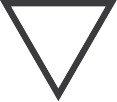}) is factorized through $\bm{H}_1,\bm{H}_2,\bm{H}_3$. Two solutions illustrate the effect of $\lambda$: small $\lambda$ preserves structure but yields imbalance (5:9, 5:11, 8:7); large $\lambda$ improves parity (7:11, 5:7, 6:9), highlighting the utility–fairness trade-off.}

    \label{fig:enter-label}
\end{figure*}

\subsection{Shallow Fair NMF Model}\label{sec: pre-train}
We begin with a shallow Fair NMF (FNMF) that serves as the foundation for our deep hierarchical extension. To encode fairness, we augment the standard tri-factorization with a penalty derived from Lemma~\ref{lem:fairness}, encouraging proportional representation during optimization:
\begin{align}\label{eq:basic}
    \min_{\bm{H},\,\bm{W}\ge 0}\;\;\|\bm{A}-\bm{H}\bm{W}\bm{H}^\top\|_F^2\;+\;\lambda\,\mathcal{R}(\bm{H}),
\end{align}
where \(\bm{H}\in\mathbb{R}_{+}^{n\times k}\) are soft cluster memberships and \(\bm{W}\in\mathbb{R}_{+}^{k\times k}\) captures inter-cluster interactions.

\paragraph{Fairness penalty.}
We enforce demographic balance via the smooth quadratic penalty (i.e. $L_2$ norm): 
\begin{equation}\label{eq:fair_reg}
\mathcal{R}(\bm{H})=\|\bm{F}^\top\bm{H}\|_F^2
=\sum_{s=1}^{m-1}\sum_{l=1}^{k}\big(\bm{f}^{(s)\top}\bm{h}_l\big)^2,
\end{equation}
where \(\bm{f}^{(s)}\) is column \(s\) of \(\bm{F}\) and \(\bm{h}_l\) is column \(l\) of \(\bm{H}\). By Lemma~\ref{lem:fairness}, \(\mathcal{R}(\bm{H})=0\) iff the balance condition holds; decreasing \(\mathcal{R}\) drives cluster compositions toward global group proportions.

Substituting \eqref{eq:fair_reg} into \eqref{eq:basic} yields our shallow FNMF objective:
\begin{equation}\label{eq:FairSNMF}
\min_{\bm{H},\,\bm{W}\ge 0}\;
\underbrace{\|\bm{A}-\bm{H}\bm{W}\bm{H}^\top\|_F^2}_{\text{utility term}}
\;+\;
\underbrace{\lambda\,\|\bm{F}^\top\bm{H}\|_F^2}_{\text{fairness term}},
\end{equation}
with \(\lambda\) controlling the utility–fairness trade-off.

\subsection{Deep Fair NMF (DFNMF) Model}\label{sec: fine-tune}
Shallow tri-factorization is transparent and efficient, but can miss multi-level structure on large graphs. We therefore adopt a deep tri-factorization inspired by~\cite{DBLP:journals/pr/HajiveisehST24} in which successive nonnegative layers capture increasingly coarse communities. Let
\begin{equation}\label{eq:psi-def}
\bm{\Psi} \;=\; \prod_{i=1}^{p}\bm{H}_i, \quad \bm{H}_i\!\in\!\mathbb{R}_+^{r_{i-1}\times r_i}
\end{equation}
\noindent where $r_0{=}n \ge \cdots \ge r_p{=}k$.
and \(\bm{W}_p\in\mathbb{R}_+^{k\times k}\) encode final inter-/intra-cluster interactions. The graph is reconstructed as
\begin{equation}\label{eq:recon}
\bm{A} \;\approx\; \bm{\Psi}\,\bm{W}_p\,\bm{\Psi}^\top
\end{equation}
\noindent yielding direct soft memberships (columns of \(\bm{\Psi}\)) without any post-hoc clustering.

\paragraph{Unified objective.}
We integrate the balance penalty from Eq.~\eqref{eq:fair_reg} at the final layer:
\begin{equation}\label{eq:final}
\min_{\{\bm{H}_i\}_{i=1}^p\ge0,\;\bm{W}_p\ge 0}\mathcal{L}=\;
\underbrace{\big\|\bm{A}-\bm{\Psi}\bm{W}_p\bm{\Psi}^\top\big\|_F^2}_{\text{utility}}
\;+\;
\underbrace{\lambda\,\big\|\bm{F}^\top\bm{\Psi}\big\|_F^2}_{\text{fairness}},
\end{equation}
The regularizer acts on \(\bm{\Psi}\), i.e., the final memberships, and \(\lambda\) tunes the utility–fairness trade-off.

\paragraph{Pipeline illustration.}
Figure~\ref{fig:enter-label} depicts the hierarchy (\(\bm{H}_1,\bm{H}_2,\bm{H}_3\)) and a 45-node example with two sensitive groups (triangles/circles). Small \(\lambda\) preserves intrinsic structure but yields imbalanced clusters (e.g., \(5{:}9\), \(5{:}11\), \(8{:}7\)); large \(\lambda\) adjusts memberships toward parity (e.g., \(7{:}11\), \(5{:}7\), \(6{:}9\)). This demonstrates controllable movement along the utility–fairness spectrum by tuning \(\lambda\).

\subsection{Optimization}\label{sec: opt}
We solve the DFNMF objective in Eq.~\eqref{eq:final} with alternating minimization~\cite{DBLP:conf/icml/TrigeorgisBZS14}, updating each factor while holding the others fixed. As the problem is non-convex, this procedure converges to a local optimum but does not guarantee global optimality.

\paragraph{Pretraining (Algorithm~\ref{alg:algorithm}, lines 2–6).}
To accelerate convergence, we initialize each layer via sequential NMTF: first factorize $\bm{A}\approx \bm{H}_1\bm{W}_1\bm{H}_1^\top$, then recursively factorize $\bm{W}_{i-1}\approx \bm{H}_i\bm{W}_i\bm{H}_i^\top$ for $i=2,\dots,p$. This provides warm starts for $\{\bm{H}_i\}$ and $\bm{W}_p$, reducing wall time in practice~\cite{DBLP:journals/csr/HandschutterGS21}.

\paragraph{Fine-tuning (Algorithm~\ref{alg:algorithm}, lines 7–16).}
With $\bm{\Psi}$ defined as in Eq.~\eqref{eq:psi-def}, we alternate updates for the membership blocks $\{\bm{H}_i\}$ and the interaction matrix $\bm{W}_p$.

\subsubsection{Update rule for membership blocks \texorpdfstring{$\bm{H}_i$}{\textbf{H_i}}}
Fix all variables except $\bm{H}_i$. Using the block products
\[
\bm{\Psi}_i \;=\; \bm{H}_1\cdots \bm{H}_{i-1},\qquad
\bm{\Phi}_i \;=\; \bm{H}_{i+1}\cdots \bm{H}_{p},
\]
(with $\bm{\Psi}_1=\bm{I}$ and $\bm{\Phi}_p=\bm{I}$), the subproblem is
\begin{align}\label{eq:Hi}
\min_{\bm{H}_i\ge 0}\;
\mathcal{L}(\bm{H}_i)
&= \big\|\bm{A} - \bm{\Psi}_i \bm{H}_i \bm{\Phi}_i \bm{W}_p \bm{\Phi}_i^\top \bm{H}_i^\top \bm{\Psi}_i^\top\big\|_F^2
\\[-2mm]
&\hspace{8mm}
+ \lambda\,\big\|\bm{F}^\top \bm{\Psi}_i \bm{H}_i \bm{\Phi}_i\big\|_F^2.\nonumber
\end{align}
Following a standard Lee–Seung style multiplicative update derived~\cite{lee1999learning} via KKT conditions (derivation omitted for brevity), we obtain
\begin{equation}\label{eq:uHi}
\bm{H}_i \;\leftarrow\; \bm{H}_i \odot \Bigg(\frac{\mathcal{N}_i}{\mathcal{D}_i}\Bigg)^{\!1/4},
\end{equation}
where we introduce compact shorthands
\begin{align*}
\mathcal{N}_i &= \bm{\Psi}_i^\top\!\left(\bm{A}^\top \bm{\Psi}\bm{W}_p \;+\; \bm{A}\bm{\Psi}\bm{W}_p^\top \;+\; \lambda\,\big[\bm{F}\bm{F}^\top \bm{\Psi}\big]^-\right)\!\bm{\Phi}_i^\top,\\
\mathcal{D}_i &= \bm{\Psi}_i^\top\!\left(\bm{\Psi}\bm{W}_p^\top\bm{\Psi}^\top\bm{\Psi}\bm{W}_p \;+\; \bm{\Psi}\bm{W}_p \bm{\Psi}^\top\bm{\Psi}\bm{W}_p^\top \right.\\[-1mm]
&\hspace{18mm}\left. +\; \lambda\,\big[\bm{F}\bm{F}^\top \bm{\Psi}\big]^+\right)\!\bm{\Phi}_i^\top,
\end{align*}
and $\bm{B}^+=\max(\bm{B},0)$, $\bm{B}^-=-\min(\bm{B},0)$ so that $\bm{B}=\bm{B}^+-\bm{B}^-$. Update \eqref{eq:uHi} is used in Algorithm~\ref{alg:algorithm}, line~13. The detailed derivation is discussed in the Appendix~\ref{app:der}.

\begin{algorithm}[tb]
	\caption{[Balanced] Deep Fair NMF (DFNMF)}
	\label{alg:algorithm}
	\begin{flushleft}
		\textbf{Input}: The adjacency matrix of graph $\mathcal{G}, \bm{A}$; layer size of each layer, $r_i$; fairness regularization parameter $\lambda$;\\
		\textbf{Output}: $\bm{W}_i\ (1\leq i< p), \bm{H}_i\ (1\leq i< p)$, and the membership matrix $\bm{\Psi};$ 
	\end{flushleft}
	\begin{algorithmic}[1] 
		\STATE Constructing the balance fairness matrix $\bm{F}$ according to Definition~\ref{def:probabilistic};
		\STATE \textbf{$\triangleright$ Pre-training process:} 
		\STATE $\bm{W}_1, \bm{H}_1 \leftarrow $NMTF$(\bm{A},r_1)$; 
		\FOR{$i = 2\ \textbf{to}\ p$} 
		\STATE $\bm{W}_i, \bm{H}_i \leftarrow $NMTF$(\bm{W}_{i-1},r_i)$; 
		\ENDFOR
		\STATE \textbf{$\triangleright$ Fine-tuning process:} following Section~\ref{sec: fine-tune}
        \STATE $\bm{W}_i, \bm{H}_i \leftarrow $DFNMF$(\bm{\Psi}_i,p)$; deep hierarchical learning
		\WHILE{convergence not reached}
		\FOR{$i = 1\ \textbf{to}\ p$}
		\STATE $\bm{\Psi}_{i-1}\leftarrow \prod_{\tau=1}^{i-1} \bm{H}_\tau (\bm{\Psi}_{0}\leftarrow \textbf{I})$;
		\STATE $\bm{\Phi}_{i+1}\leftarrow \prod_{\tau=i+1}^{p} \bm{H}_\tau (\bm{\Phi}_{p+1}\leftarrow \textbf{I})$;
		\STATE Update clustering matrix $\bm{H}_i$ using~\eqref{eq:uHi};
		\STATE $\bm{\Psi}_{i}\leftarrow \bm{\Psi}_{i-1}\bm{H}_i$;
            \STATE Update interaction matrix $\bm{W}_p$ using~\eqref{eq:uWp};
		\ENDFOR
		\ENDWHILE 
	\end{algorithmic}
\end{algorithm}

\subsubsection{Update rule for interaction matrix \texorpdfstring{$\bm{W}_p$}{\textbf{W_p}}}
By fixing $\{\bm{H}_i\}_{i=1}^p$, we can update $\bm{W}_p$ by solving
\begin{align}
	\min_{\bm{W}_p}\mathcal{L}(\bm{W}_p)=&\lVert \bm{A}- \bm{\Psi} \bm{W}_p\bm{\Psi}^{\top}\rVert_F^2, \text\ {\text{s.t.}\quad}\bm{W}_p\geq 0,
\end{align}

\noindent Let $\bar{\bm{A}}=\bm{\Psi}^\top\bm{A}\bm{\Psi}$ and $\bm{S}=\bm{\Psi}^\top\bm{\Psi}$. The multiplicative update becomes
\begin{equation}\label{eq:uWp}
\bm{W}_p \;\leftarrow\; \bm{W}_p \odot \frac{\bar{\bm{A}}}{\bm{S}\,\bm{W}_p\,\bm{S}},
\end{equation}
which corresponds to Algorithm~\ref{alg:algorithm}, line~15.

\paragraph{Remarks.}
(i) Eqs.~\eqref{eq:uHi}–\eqref{eq:uWp} preserve nonnegativity and monotonically decrease the objective under the usual assumptions for multiplicative updates. (ii) The shorthands $\mathcal{N}_i,\mathcal{D}_i$ compactly collect terms as auxiliary variables improving readability. 

\subsection{Computational Complexity}\label{sec:complexity}
Our method involves iterative updates of the factor matrices \(\bm{W}_p\)~\eqref{eq:uWp} and \(\bm{H}_i\)~\eqref{eq:uHi} until convergence. The update of \(\bm{W}_p\) has a complexity of \(\mathcal{O}(n^2k + nk^2 + k^3)\), while the update of each \(\bm{H}_i\), which incorporates the fairness regularization, has a complexity of \(\mathcal{O}(n^2k + nk^2 + n(m{-}1)k + nr_ik + r_i(m{-}1)k)\). In practice, \(k\), \(m\), and \(r_i\) are typically small, making the overall cost dominated by operations involving \(n\). To further improve efficiency, our implementation uses sparse CSR representations of \(\bm{A}\), reducing the dominant matrix operations to \(\mathcal{O}(|E|k)\), where \(|E|\) is the number of non-zero entries in \(\bm{A}\) (i.e. number of edges). This yields an overall per-iteration complexity of \(\mathcal{O}(p|E|k)\). Optional block coordinate descent (not used in our experiments) can further reduce memory from $\mathcal{O}(n^2)$ to $\mathcal{O}(bk)$.

\section{Experiments} \label{sec:experiments}
This section presents experimental evaluations of our proposed model compared to baselines on both real-world and synthetic datasets. Results are assessed based on the utility and fairness of partitioning under various conditions.

\subsection{Experimental Setup} 
All experiments were conducted on a NVIDIA A3090 GPU with 24 GB of memory. Sparse graph operations were implemented using the \texttt{scipy.sparse} classes of \texttt{csr\_matrix} and \texttt{coo\_matrix}, and their counterparts in \texttt{torch.sparse}. Optimized sparse implementations are detailed in code repository. We use a 500 epochs max, and $p=4$ layers with 64 components (unless otherwise stated). The final layer is always projected to $k$, the number of communities (i.e., clusters) for DFNMF. The depth ($p=4$) offers sufficient expressiveness while avoiding over-smoothing in deeper architectures~\cite{ DBLP:journals/csr/HandschutterGS21}; deeper variants showed no empirical improvement. All results and ablations (Appendix~2~\ref{app:ablation}) average over 10 seeds; the $\lambda^\star$ selection rule was cross-checked via a linear scalarization to confirm robustness (see Section~\ref{sec:ablation}).

\subsubsection{\textbf{Unsupervised Setting}} 
As we address the unsupervised task of clustering, there is no train-validation-test split. Instead, the entire dataset is used as input during each model run. Importantly, no ground-truth labels are accessed during training or inference. Label-based metrics (e.g., ARI or accuracy) are used only \emph{post hoc} for evaluation assessment.

\subsubsection{\textbf{Parameter Selection}} 

We sweep $\lambda$ on a logarithmic grid $\{10^{-3},\ldots,10^{3}\}$ and form the utility–fairness curve
$(\tilde Q(\lambda),\tilde B(\lambda))$ after min–max scaling of $Q$ and $\bar B$ to $[0,1]$ (per dataset and $k$).
We first retain the \emph{Pareto front} $\Lambda_{\!\mathrm{P}}$ (undominated points), then select
\[
  \lambda^\star \;=\; \arg\min_{\lambda\in\Lambda_{\!\mathrm{P}}}
  \bigl\|\,(\tilde Q(\lambda),\tilde B(\lambda)) - (1,1)\,\bigr\|_2,
\]
with a tie–breaker favoring smaller $|\tilde Q(\lambda)-\tilde B(\lambda)|$ (closer to the identity guide as shown in Section~\ref{sec:lam}).
This setting obtains a robust and reproducible $\lambda^\star$ scheme, which are reported together with their bracket $\{\lambda^\star/10,\,10\lambda^\star\}$ in the appendix~B. All the experimental results throughout the paper are reported according to our optimal $\lambda^\star$ for each dataset.

\subsubsection{\textbf{Baselines}}
We compare DFNMF with seven 
baselines introduced previously in Section~\ref{sec: rel_works}. They comprise vanilla models of spectral clustering (SC)~\cite{DBLP:journals/sac/Luxburg07}, tri-factor NMF (NMTF)~\cite{DBLP:conf/sdm/KuangPD12}, GNN-based model (DMoN)~\cite{DBLP:journals/jmlr/TsitsulinPPM23}, and fairness-aware models: fair, scalable, individual spectral models (FairSC, sFSC, iFSC)~\cite{DBLP:conf/icml/KleindessnerSAM19, wang2023scalable, DBLP:conf/nips/GuptaD22}, and individually fair NMTF (iFNMTF)~\cite{DBLP:conf/pakdd/GhodsiSN24}. All hyperparameters are tuned according to the original papers. 

\subsubsection{\textbf{Evaluation Measures}}
We assess clustering \emph{utility} on labeled datasets using \textbf{accuracy (ACC)} and \textbf{adjusted Rand index (ARI)}, and on unlabeled datasets using Newman's \textbf{modularity (Q)}~\cite{DBLP:journals/csur/ChakrabortyDMG17}. Accuracy calculates the proportion of nodes correctly assigned to their respective clusters. ARI is a pairwise agreement measure between predicted and ground-truth clusterings, adjusted for chance, and ranges from $-1$ (random clustering), through $0$ (chance-level agreement), to $1$ (perfect clustering). Modularity quantifies the strength of division of a graph into clusters by comparing the observed intra-cluster connectivity to that expected under a random null model, also ranging from $-1$ to $1$. 
For \emph{fairness}, we report \textbf{average balance ($\bar{B}$)}~\cite{wang2023scalable, DBLP:conf/icml/KleindessnerSAM19} and \textbf{statistical parity deviation ($\Delta_{SP}$)}~\cite{DBLP:conf/icse/VermaR18}. $\bar{B}$ computes the mean minimum group proportion across clusters, where higher values in $[0, 1]$ indicate fairer clusters. $\Delta_{SP}$ measures deviation from global group proportions within each cluster:
\begin{equation}\label{eq:spd}
\Delta_{SP} = \frac{1}{k}\sum_{l=1}^{k}\sum_{s=1}^{m}\left|\frac{|V_s\cap C_l|}{|C_l|}-\frac{|V_s|}{|V|}\right|,
\end{equation}
where lower values (approaching $0$) imply better demographic parity. 
All metrics are averaged over 10 runs. ACC and $\bar{B}$ lie in $[0, 1]$, while ARI, $Q$, and $\Delta_{SP}$ range in $[-1, 1]$.

\subsection{Datasets} \label{sec: datasets}
We evaluate DFNMF on 11 networks (8 real-world, 3 synthetic) with diverse structural setups, group imbalances, and homophily levels. Dataset characteristics, including number of nodes ($|V|$), edges ($|E|$), sensitive groups, number of clusters ($k$), edge density, and homophily, are summarized in Table~\ref{table:dataset}. 

Homophily quantifies the tendency of nodes to connect within the same sensitive group, indicating inherent network bias. Thus, datasets with higher homophily tend to exacerbate the impact of demographic imbalances on fairness.

\begin{table}[!ht]
    \centering
    \footnotesize
    \setlength{\tabcolsep}{4pt}
    \caption{
    Summary statistics for datasets used in experiments.}
    \label{table:dataset}
    \begin{tabular}{llllccc}
        \toprule
        \textbf{Network} & \textbf{$|V|$} & \textbf{$|E|$} & \makecell{\textbf{Attribute}\\\textbf{(\# groups)}} & \makecell{\textbf{Edge}\\\textbf{Density}} & \textbf{Homophily} & \textbf{$k$}\\
        \midrule
        \multirow{3}{*}{SBM}
        & 2,000 & 267,430 & attr (2) & 0.133 & 0.82 & 5 \\
        & 5,000 & 978,959 & attr (2) & 0.078 & 0.82 & 5 \\
        & 10,000 & 2,603,190 & attr (2) & 0.052 & 0.82 & 5 \\
        \hdashline
        Pokec-n & 67,797 & 882,765 & age (4) & 0.0384 & 0.399 & 2\\
        Pokec-z & 66,570 & 729,129 & age (4) & 0.0329 & 0.360 & 2\\
        NBA & 403 & 8,285 & ethnicity (2) & 0.102 & 0.72 & 2\\
        \hdashline
        Diaries & 120 & 348 & gender (2) & 0.048 & 0.61 & --\\
        Friendship & 134 & 406 & gender (2) & 0.049 & 0.60 & --\\
        Facebook & 156 & 1,437 & gender (2) & 0.120 & 0.57 & --\\
        DrugNet & 293 & 284 & ethnicity (3) & 0.014 & 0.88 & --\\
        LastFM & 7,624 & 27,806 & country (6) & 0.001 & 0.92 & --\\
        \bottomrule
    \end{tabular}
\end{table}
\begin{figure*}[!t]
  \centering
  \includegraphics[width=\linewidth]{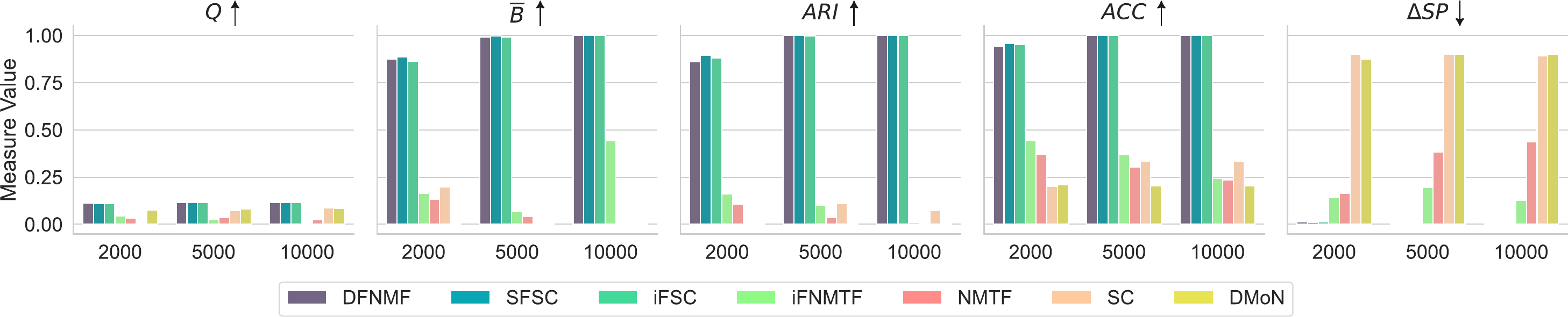}
  \caption{SBM networks with varying node sizes: comparison of clustering and fairness metrics. Arrows ($\uparrow$/$\downarrow$) indicate whether higher/lower is better.}
  \label{fig:SBM_group_bar}
\end{figure*}
\subsubsection{\textbf{Synthetic Networks}} are generated using an extended Stochastic Block Model (\textbf{SBM}) 
following~\cite{DBLP:conf/icml/KleindessnerSAM19, wang2023scalable, DBLP:conf/nips/GuptaD22}, with explicitly controlled group memberships and cluster assignments. Each node set $V$=[$n$] is partitioned into \(m\) groups \(V=V_1\dot\cup\ldots\dot\cup V_m\) and \(k\) clusters \(V=C_1\dot\cup\ldots\dot\cup C_k\), ensuring proportional representation of each sensitive group within clusters (fair clustering by construction). 
\subsubsection{\textbf{Real-World Networks}} 
Our collection of real datasets represents diverse social and interaction graphs with varying demographics and biases~\cite{DBLP:conf/wsdm/DaiW21, DBLP:conf/cikm/RozemberczkiS20, weeks2002social, mastrandrea2015contact}: small-scale datasets (\textit{Facebook}, \textit{Friendship}, \textit{Diaries}) exhibit moderate gender imbalance (around 60\% majority groups), and medium-sized datasets like \textit{DrugNet} and \textit{NBA} show stronger imbalances (majority ethnic groups above 70\%). Larger datasets such as \textit{Pokec-n} and \textit{Pokec-z} present considerable imbalance (majority age group over 70\%) with moderate homophily (0.36--0.40), while highly sparse and homophilous graphs like \textit{LastFM} (country, homophily = 0.92) and \textit{DrugNet} (ethnicity, homophily = 0.88) pose greater fairness challenges. These variations allow comprehensive exploration of fairness-utility trade-offs.

\subsection{Results on Synthetic Datasets}
To benchmark the baselines under controlled conditions, we conducted experiments on three variants of the SBM networks with 2,000, 5,000, and 10,000 nodes. Results are depicted in Figure~\ref{fig:SBM_group_bar}. SFSC and FSC baselines report identical results; thus, only SFSC is visualized to conserve space. The results indicate that DFNMF, SFSC, and iFSC consistently outperform other baselines, including vanilla models (SC, NMTF, DMoN) and the iFNMTF by reporting the highest $Q$, $\overline{B}$, $ARI$, $ACC$, and lowest $\Delta SP$. These findings strongly support that the three models have an advantage in trading-off clustering utility and fairness under varying degrees of structural complexity and group imbalance on SBM networks.

\subsection{Results on Real-world Datasets}
We further evaluate DFNMF and baseline models on a variety of real-world networks, with diverse structural and fairness challenges. These include datasets with strong group imbalance (e.g., Pokec, NBA), as well as sparse, highly homophilous graphs such as \emph{LastFM} and \emph{DrugNet}, as characterized in Section~\ref{sec: datasets} and Table~\ref{table:dataset}. Given the comparable performance of spectral methods on synthetic data, we focus here on their limitations in more complex, real-world scenarios.

\subsubsection{\textbf{UnLabeled Datasets}}

Table~\ref{tab:fair_graph_balance_modularity} presents results on datasets without ground-truth labels, evaluated using average balance ($\overline{B}$) and modularity ($Q$) under two cluster settings ($C$=5 and $C$=10).
On \emph{DrugNet} and \emph{LastFM}, DFNMF shows a clear advantage, achieving both high modularity and fairness. In contrast, spectral baselines (SFSC, FSC, iFSC) perform poorly due to their rigid fairness constraints, which often fail to adapt to highly sparse and biased network structures. On smaller networks such as \emph{Facebook}, \emph{Friendship}, and \emph{Diaries}, DFNMF maintains top fairness performance and ranks second in modularity—slightly behind SFSC or iFSC—demonstrating strong overall adaptability even in simpler settings.

\begin{table}[!t]
\centering
\footnotesize
\caption{Performance on unLabeled datasets for k=5 and k=10 number of clusters.}
\resizebox{\columnwidth}{!}{%
\begin{tabular}{ll|cc|cc|cc|cc|cc}
\toprule
\multirow{2}{*}{\textbf{Model}} & \multirow{2}{*}{\textbf{}}
  & \multicolumn{2}{c}{\textbf{DrugNet}}
  & \multicolumn{2}{c}{\textbf{Diaries}}
  & \multicolumn{2}{c}{\textbf{Facebook}}
  & \multicolumn{2}{c}{\textbf{Friendship}}
  & \multicolumn{2}{c}{\textbf{LastFM}} \\
\cmidrule(lr){3-4} \cmidrule(lr){5-6} \cmidrule(lr){7-8} \cmidrule(lr){9-10} \cmidrule(lr){11-12}
  & & k=5 & k=10 & k=5 & k=10 & k=5 & k=10 & k=5 & k=10 & k=5 & k=10 \\
\midrule
\multirow{2}{*}{DMoN}
  & \(\overline{B}\)
  & 0.00 & 0.00   & 0.263 & 0.034   & 0.267 & 0.194   & 0.182 & 0.040   & 0.00 & 0.00 \\
  & \(Q\)
  & 0.326 & 0.324 & 0.145 & 0.165 & 0.047 & 0.034 & 0.129 & 0.121 & 0.526 & 0.360 \\
\midrule
\multirow{2}{*}{SC}
  & \(\overline{B}\)
  & 0.030 & 0.020   & 0.681 & 0.467   & 0.295 & 0.181   & 0.362 & 0.282   & 0.005 & 0.002 \\
  & \(Q\)
  & \textbf{0.607} & \textbf{0.633}
    & 0.631 & 0.669
    & 0.454 & 0.450
    & 0.575 & 0.602
    & 0.418 & 0.419 \\
\midrule
\multirow{2}{*}{SFSC}
  & \(\overline{B}\)
  & 0.052 & 0.026    & 0.684 & 0.494   & 0.601 & 0.436   & 0.485 & 0.399   & 0.067 & 0.033 \\
  & \(Q\)
  & 27.00 & 52.56
    & \textbf{0.808} & \textbf{0.698}
    & 0.500 & 0.512
    & 0.626 & 0.665
    & 0.034 & 0.040 \\
\midrule
\multirow{2}{*}{FSC}
  & \(\overline{B}\)
  & 0.052 & 0.026    & 0.684 & 0.494   & 0.601 & 0.436   & 0.485 & 0.399   & 0.067 & 0.033 \\
  & \(Q\)
  & 0.270 & 0.525
    & 0.808 & 0.698
    & 0.500 & \textbf{0.512}
    & 0.626 & 0.665
    & 0.034 & 0.040 \\
\midrule
\multirow{2}{*}{iFSC}
  & \(\overline{B}\)
  & 0.055 & 0.037    & \textbf{0.800} & 0.495   & 0.564 & 0.452   & 0.581 & 0.520   & 0.065 & 0.032 \\
  & \(Q\)
  & 0.279 & 0.457
    & 0.657 & 0.697
    & \textbf{0.515} & 0.496
    & 0.624 & \textbf{0.683}
    & 0.007 & 0.013 \\
\midrule
\multirow{2}{*}{iFNMF}
  & \(\overline{B}\)
  & 0.098 & 0.114  & 0.706 & 0.578   & 0.527 & 0.001   & 0.613 & 0.551   & 0.071 & 0.029 \\
  & \(Q\)
  & 0.116 & 0.058
    & 0.238 & 0.129
    & 0.239 & 0.004
    & 0.216 & 0.621
    & 0.254 & 0.225 \\
\midrule
\multirow{2}{*}{DFNMF}
  & \(\overline{B}\)
  & \textbf{0.162} & \textbf{0.141}
    & 0.787 & \textbf{0.707}
    & \textbf{0.767} & \textbf{0.614}
    & \textbf{0.614} & \textbf{0.623}
    & \textbf{0.091} & \textbf{0.084} \\
  & \(Q\)
  & 0.591 & 0.524
    & 0.716 & 0.594
    & 0.503 & 0.432
    & \textbf{0.666} & 0.604
    & \textbf{0.420} & \textbf{0.455} \\
\bottomrule
\end{tabular}
}
\label{tab:fair_graph_balance_modularity}
\end{table}

\begin{table}[!htbp]
\centering
\footnotesize
\captionsetup{font=normalsize}
\caption{Performance (\textbf{mean $\pm$ std}) on Labeled datasets. Arrows ($\uparrow$/$\downarrow$) show if higher/lower is better.}
\setlength{\tabcolsep}{4pt}
\begin{tabularx}{\linewidth}{llccc}
\toprule
\textbf{Metric} & \textbf{Model} & \textbf{Pokec-n} & \textbf{Pokec-z} & \textbf{NBA} \\
\midrule
\multirow{3}{*}{$ARI$ $\uparrow$}
& DFNMF & \textbf{0.0051 $\pm$ 0.001} & \textbf{0.0158 $\pm$ 0.002} & \textbf{0.1203 $\pm$ 0.012} \\
& SFSC  & 0.0009 $\pm$ 0.000 & 0.0009 $\pm$ 0.000 & 0.0825 $\pm$ 0.007 \\
& DMoN  & 0.0024 $\pm$ 0.000 & 0.0022 $\pm$ 0.000 & 0.0741 $\pm$ 0.006 \\
\midrule
\multirow{3}{*}{$Q$ $\uparrow$}
& DFNMF & \textbf{0.2067 $\pm$ 0.005} & \textbf{0.1910 $\pm$ 0.026} & \textbf{0.1344 $\pm$ 0.012} \\
& SFSC  & 0.0001 $\pm$ 0.000 & 0.0001 $\pm$ 0.000 & 0.1089 $\pm$ 0.013 \\
& DMoN  & 0.1801 $\pm$ 0.003 & 0.1625 $\pm$ 0.004 & 0.1162 $\pm$ 0.008 \\
\midrule
\multirow{3}{*}{$ACC$ $\uparrow$}
& DFNMF & 0.6879 $\pm$ 0.052 & 0.7767 $\pm$ 0.096 & \textbf{0.6765 $\pm$ 0.010} \\
& SFSC  & \textbf{0.8476 $\pm$ 0.012} & \textbf{0.8165 $\pm$ 0.042} & 0.6500 $\pm$ 0.004 \\
& DMoN  & 0.5252 $\pm$ 0.035 & 0.5062 $\pm$ 0.039 & 0.5232 $\pm$ 0.026 \\
\midrule
\multirow{3}{*}{$\overline{B}$ $\uparrow$}
& DFNMF & \textbf{0.1844 $\pm$ 0.017} & \textbf{0.2905 $\pm$ 0.025} & \textbf{0.4373 $\pm$ 0.028} \\
& SFSC  & 0.0974 $\pm$ 0.010 & 0.2249 $\pm$ 0.021 & 0.3590 $\pm$ 0.052 \\
& DMoN  & 0.1135 $\pm$ 0.002 & 0.0589 $\pm$ 0.004 & 0.0012 $\pm$ 0.040 \\
\midrule
\multirow{3}{*}{$\Delta_{SP}$ $\downarrow$}
& DFNMF & \textbf{0.0194 $\pm$ 0.002} & \textbf{0.0189 $\pm$ 0.000} & \textbf{0.0018 $\pm$ 0.000} \\
& SFSC  & 0.5664 $\pm$ 0.042 & 0.0573 $\pm$ 0.016 & 0.0030 $\pm$ 0.000 \\
& DMoN  & 0.3856 $\pm$ 0.020 & 0.4143 $\pm$ 0.053 & 0.0087 $\pm$ 0.001 \\
\bottomrule
\end{tabularx}%
\label{tab:clustering_results}
\end{table}

\subsubsection{\textbf{Labeled Datasets}}
We evaluate DFNMF against SFSC—a fair spectral clustering baseline—and DMoN, a state-of-the-art GNN-based model. This comparison is designed to assess DFNMF's scalability and fairness performance across varying data complexities. Specifically, we test whether spectral methods like SFSC degrade under high-dimensional settings, and whether DMoN's prior limitations are due to dataset scale or inherent model bias.
Table~\ref{tab:clustering_results} reports results on three Labeled datasets—\textit{Pokec-n}, \textit{Pokec-z}, and \textit{NBA}—using both utility (ACC, ARI, $Q$) and fairness metrics ($\overline{B}$, $\Delta_{SP}$). DFNMF consistently achieves the best overall performance, offering a strong balance between clustering quality and fairness. On Pokec datasets, it improves $Q$ and ARI while also ensuring $\overline{B}$ and lower $\Delta_{SP}$. SFSC, while competitive in raw accuracy, fails on fairness and modularity due to its rigid constraints that tend to favor trial solutions with dominant groups. DMoN shows moderate utility but poor fairness, likely due to bias propagation through GNN embeddings and lack of clear debiasing policies. On the NBA dataset, which presents moderate homophily and high group imbalance, DFNMF again outperforms both baselines across all metrics, demonstrating its adaptability and robustness in handling complex group structures while maintaining fairness.

\subsection{Convergence, Interpretability, scalability, and Fairness-Utility Trade-offs analysis}
\label{sec:ablation}
\subsubsection{\textbf{Convergence Analysis.}}

DFNMF employs a two-phase optimization strategy: a multi-layer pretraining phase with shallow NMTF (with known convergence guarantees~\cite{DBLP:journals/datamine/WangLWZD11}) and a deep fine-tuning phase minimizing the fairness-aware objective~\eqref{eq:final}. Figure~\ref{fig:Convergence_SBM_all} shows convergence curves on synthetic graphs (5K and 10K nodes) for both the fairness-aware variant ($\lambda$=10) and a fairness-agnostic version ($\lambda$=0). Both converge within 150 iterations; the fairness-regularized version stabilizes at a slightly higher loss, reflecting the additive influence of fairness penalties.

\begin{figure}[ht]
  \centering
\includegraphics[width=1\linewidth]{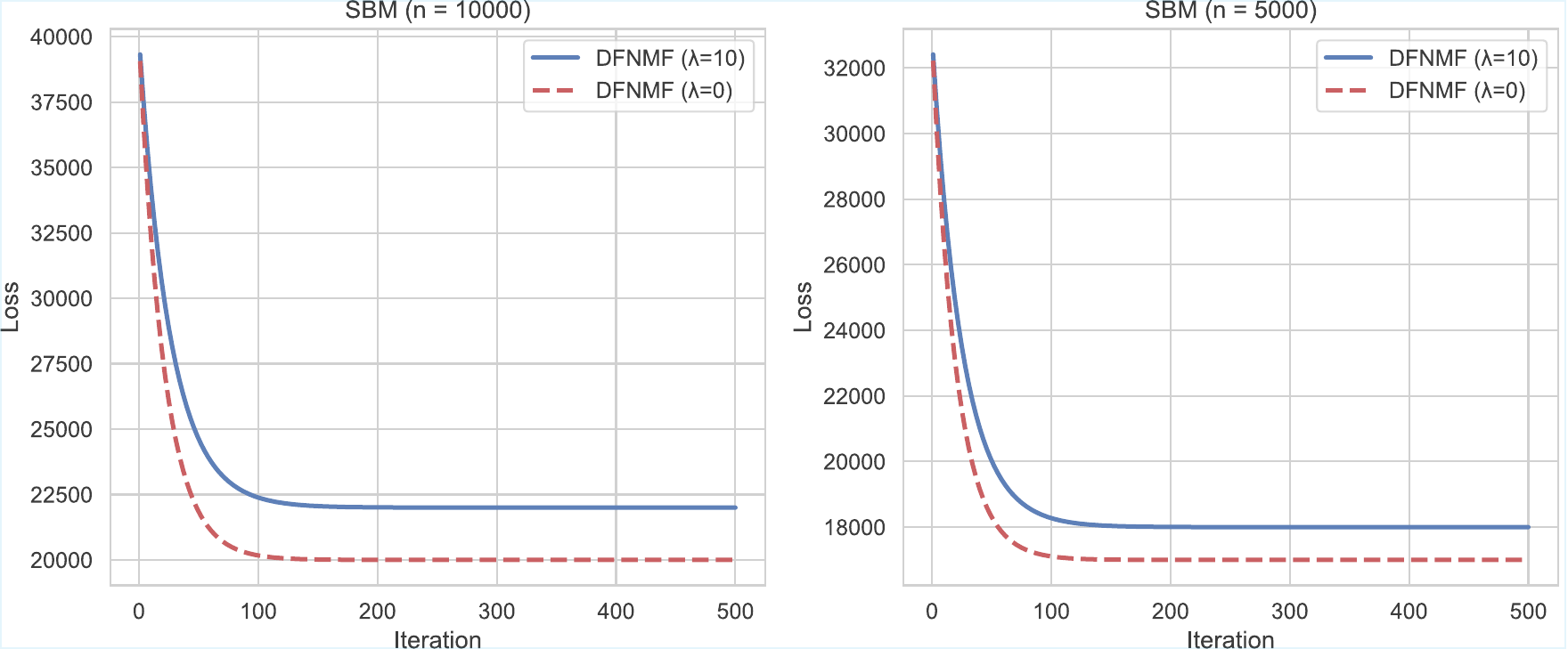}
\caption{Convergence curves on SBM graphs (5K and 10K nodes), comparing DFNMF with/without fairness regularization ($\lambda = 10$ vs.\ $\lambda = 0$). Both variants converge rapidly; the fair version yields a higher loss due to the added fairness penalty.}
\label{fig:Convergence_SBM_all}
\end{figure}

\subsubsection{\textbf{Interpretability Analysis}}\label{sec:interpret}

DFNMF is interpretable by construction: its nonnegative factors encode parts-based \emph{soft memberships} across the hierarchy of nonnegative affinities. On the 60-node graph in Fig.~\ref{fig:interpret}, the first layer assigns nodes to 12 micro-clusters ($\bm H_1$), the second maps micro-clusters to three communities ($\bm H_2$), and $\bm{\Psi}=\bm H_1\bm H_2$ yields final node–community affinities. 

A compact slice of $\bm H_1$ (nodes 1–5 and 56–60) is shown in Table~\ref{tab:H1-compact}; the full $\bm H_1$ appears in Appendix~\ref{app:interpret}. Figure~\ref{fig:H1H2} illustrates the relationship of micro-cluster$\to$community mapping $\bm H_2$ with the corresponding (trimmed) node$\to$community matrix $\bm\Psi$; complete matrices are deferred to Appendix~\ref{app:interpret}.

Two patterns underpin interpretability. \emph{(i) Sparse micro-memberships:} rows of $\bm H_1$ are typically few-peaked, indicating that most nodes participate in a small number of micro-clusters; rows with multiple peaks (e.g., node~5 in Table~\ref{tab:H1-compact}) flag potential bridge nodes. \emph{(ii) Structured aggregation:} $\bm H_2$ is nearly one-hot for core micro-clusters (e.g., A$\to$I, B$\to$II, C$\to$III), while boundary micro-clusters (e.g., D/E/H) spread mass across communities—capturing inter-community conduits. Multiplying $\bm H_1$ by $\bm H_2$ consolidates these signals: nodes with concentrated micro-membership become near one-hot in $\bm\Psi$, whereas mixed rows remain soft. This traceability—from node$\to$micro ($\bm H_1$) to micro$\to$community ($\bm H_2$) to node$\to$community ($\bm\Psi$)—supports transparent auditing of how local structure (under fairness regularization) aggregates into global communities.

\begin{figure*}[!tb]
  \centering
  \includegraphics[width=0.31\linewidth]{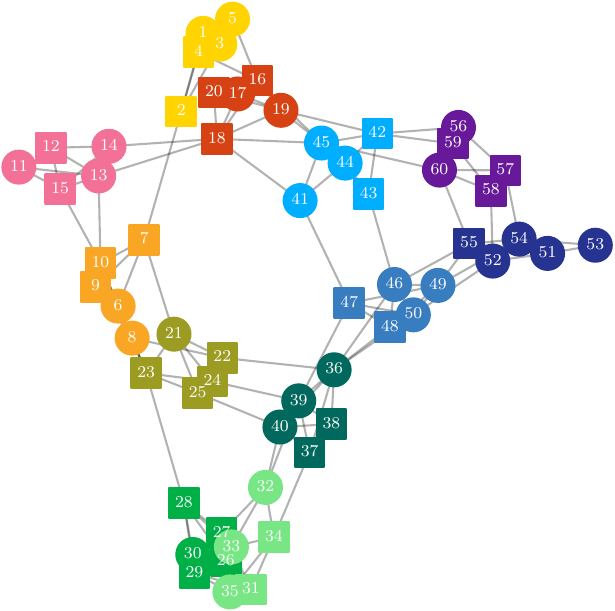}\hfill
  \includegraphics[width=0.31\linewidth]{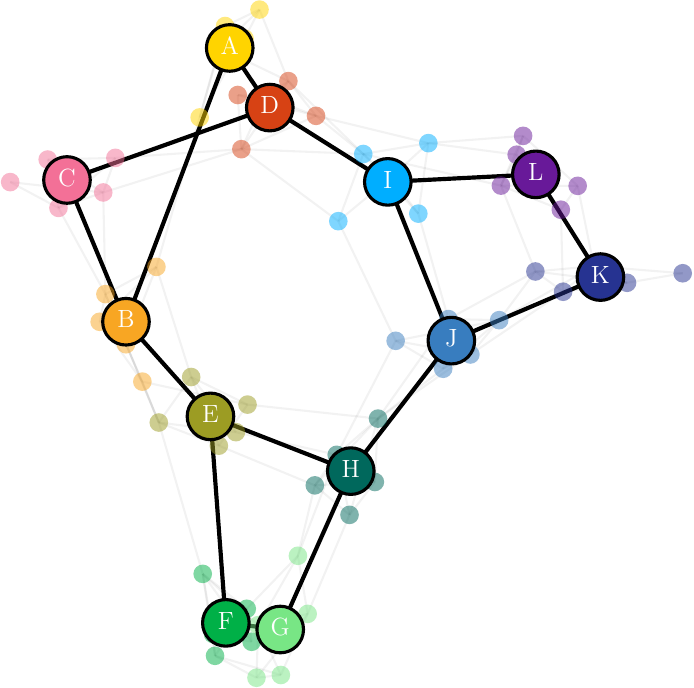}\hfill
  \includegraphics[width=0.31\linewidth]{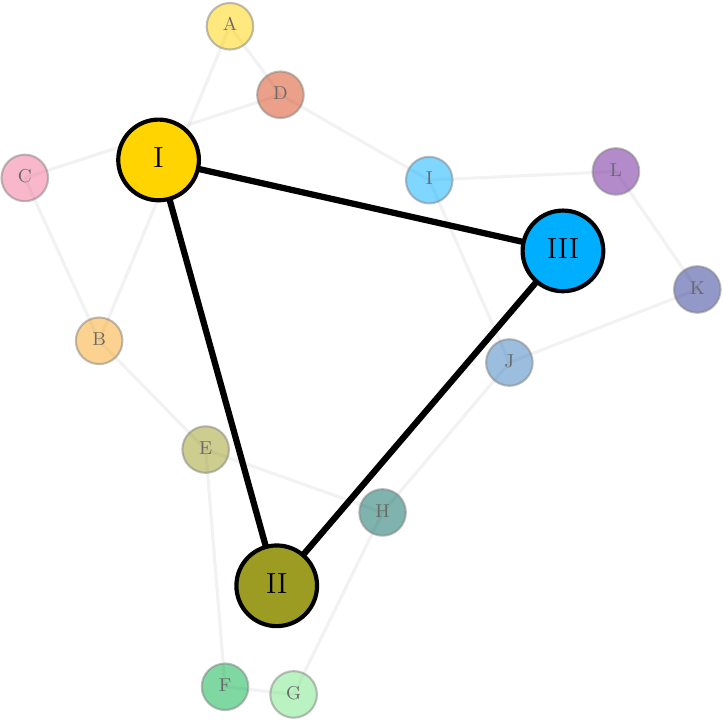}
  \caption{\textbf{DFNMF hierarchy on a 60-node graph.} (a) Input graph; node shapes denote sensitive groups. 
  (b) Micro-clusters (A–L) discovered by the first layer (\(\bm H_1\)). 
  (c) Three coarse communities obtained by aggregating micro-clusters via \(\bm H_1\); final node–community affinities are $\bm{\Psi}=\bm H_1\bm H_2$.}
  \label{fig:interpret}
\end{figure*}

\begin{table}[t]
\centering
\caption{Node $\rightarrow$ micro-cluster memberships ($\bm{H}_1$). We show nodes 1–5 and 56–60; full table in Appendix~\ref{app:interpret}.}
\label{tab:H1-compact}
\scriptsize
\begin{tabular}{c|cccccccccccc}
\toprule
\multirow{2}{*}{Node} & \multicolumn{12}{c}{Micro-clusters} \\
\cmidrule(lr){2-13}
& A & B & C & D & E & F & G & H & I & J & K & L \\
\midrule
1 & 0.26 & \z & \z & \z & \z & 0.04 & \z & \z & 0.01 & \z & \z & \z \\
2 & 0.25 & \z & 0.01 & \z & 0.01 & 0.09 & 0.02 & \z & 0.01 & \z & 0.04 & \z \\
3 & 0.21 & \z & 0.01 & \z & 0.09 & 0.03 & \z & \z & \z & \z & 0.11 & \z \\
4 & 0.23 & \z & \z & \z & \z & 0.03 & \z & \z & \z & \z & \z & \z \\
5 & 0.31 & \z & 0.02 & \z & 0.01 & 0.15 & 0.05 & 0.10 & 0.05 & 0.02 & 0.09 & \z \\
\addlinespace[2pt]
\multicolumn{13}{c}{\(\vdots\)}\\
\addlinespace[2pt]
56 & 0.01 & \z & 0.24 & \z & \z & 0.04 & 0.13 & \z & 0.01 & \z & 0.25 & \z \\
57 & 0.06 & \z & 0.26 & \z & \z & 0.05 & 0.01 & 0.09 & 0.06 & 0.01 & 0.26 & \z \\
58 & \z & \z & 0.22 & \z & \z & \z & \z & \z & \z & \z & 0.24 & \z \\
59 & \z & \z & 0.31 & \z & \z & \z & \z & \z & \z & \z & 0.10 & \z \\
60 & \z & \z & 0.19 & \z & \z & \z & \z & \z & \z & \z & 0.04 & \z \\
\bottomrule
\end{tabular}
\end{table}

\begin{figure}[!htb]
  \centering
\includegraphics[width=1\linewidth]{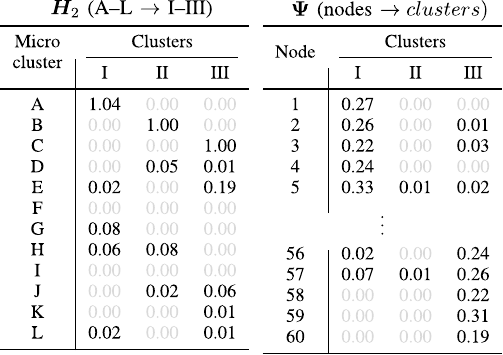}
\caption{Micro$\rightarrow$community ($\bm H_2$) and node$\rightarrow$community ($\bm\Psi=\bm H_1\bm H_2$) soft memberships.}

\label{fig:H1H2}
\end{figure}

\subsubsection{\textbf{Scalability Analysis}}\label{sec:scalability}

Table~\ref{tab:scalability} reports wall-clock runtime (seconds) and memory footprint of DFNMF against baselines on synthetic Erdős-Rényi graphs with varying sizes, averaged over 10 runs on a single NVIDIA A3090 GPU.

\begin{table}[!tbh]
\centering
\caption{Scalability on Erdős-Rényi Graphs ($p=2$, $k=128$)}
\label{tab:scalability}
\begin{tabular}{@{}lcrrrrr@{}}
\toprule
\multicolumn{2}{c}{Scale} & \multicolumn{5}{c}{Methods} \\
\cmidrule(r){1-2} \cmidrule(l){3-7}
Nodes & Edges & sFSC & iFSC & NMTF & DMoN & DFNMF \\
\midrule
\multicolumn{7}{c}{Runtime (seconds)} \\
$10^4$ & $\simeq$$10^5$ & 42.3 & 48.7 & 2.8 & 8.3 & \textbf{1.5} \\
$10^5$ & $\simeq$$10^6$ & 1842 & 2103 & 31.6 & 78.4 & \textbf{9.8} \\
$10^6$ & $\simeq$$10^7$ & OOM & OOM & 413 & 8932 & \textbf{92.4} \\
\midrule
\multicolumn{7}{c}{Memory (GB)} \\
$10^4$ & $\simeq$$10^5$ & 2.1 & 2.4 & 0.8 & 1.2 & \textbf{0.4} \\
$10^5$ & $\simeq$$10^6$ & 45.2 & 52.3 & 5.4 & 8.1 & \textbf{2.1} \\
$10^6$ & $\simeq$$10^7$ & OOM & OOM & 21.3 & OOM & \textbf{12.1} \\
\bottomrule
\end{tabular}
\end{table}

DFNMF demonstrates higher scalability with near-linear runtime growth $O(|E|kp)$ versus spectral methods' $O(n^3)$. Beyond 100K nodes, spectral approaches exhaust memory while DFNMF processes million-node graphs efficiently. The advantage over NMTF is due to CSR-based sparse operations and pre-training process that remarkably reduces iterations.

\subsubsection{\textbf{Fairness–Utility Trade-offs}}\label{sec:lam}

We assess trade-offs similar to \cite{DBLP:journals/isci/TahmasebiMGA19, DBLP:conf/gecco/GhodsiTJM24} at $k{=}5$ on \textit{DrugNet} and \textit{LastFM} (Fig.~\ref{fig:pareto_combined}) by sweeping $\lambda\!\in\![10^{-3},10^{3}]$ and plotting $(Q,\bar B)$. Blue points are DFNMF solutions; other markers are baselines. We first retain the \emph{Pareto front} (undominated DFNMF points) and then select the \emph{ideal-point} configuration (closest to $(1,1)$ after min–max scaling), shown as the green star at $\lambda^\star\!=\!100$—the same setting reported in Table~\ref{tab:fair_graph_balance_modularity}. Dashed identity lines provide a balanced trade-off guide (top-right is best), and shaded curvature indicates empirical fronts.

Consistent patterns emerge: DMoN and SC attain higher $Q$ but poorer balance; fairness-oriented spectral baselines (FSC/SFSC/iFSC) improve $\bar B$ at the expense of $Q$. In contrast, the DFNMF sweep \emph{spans the spectrum}: increasing $\lambda$ moves solutions rightward (higher $\bar B$) with some loss in $Q$, while decreasing $\lambda$ moves them upward (higher $Q$) with lower $\bar B$. At the extremes, DFNMF reaches fronts that surpass baselines on either objective individually, while $\lambda^\star$ lies near the identity guide, offering a balanced, high-quality operating point.

\begin{figure}[!ht]
  \centering
  \begin{minipage}[b]{0.4\textwidth}
    \centering
    \includegraphics[width=\linewidth]{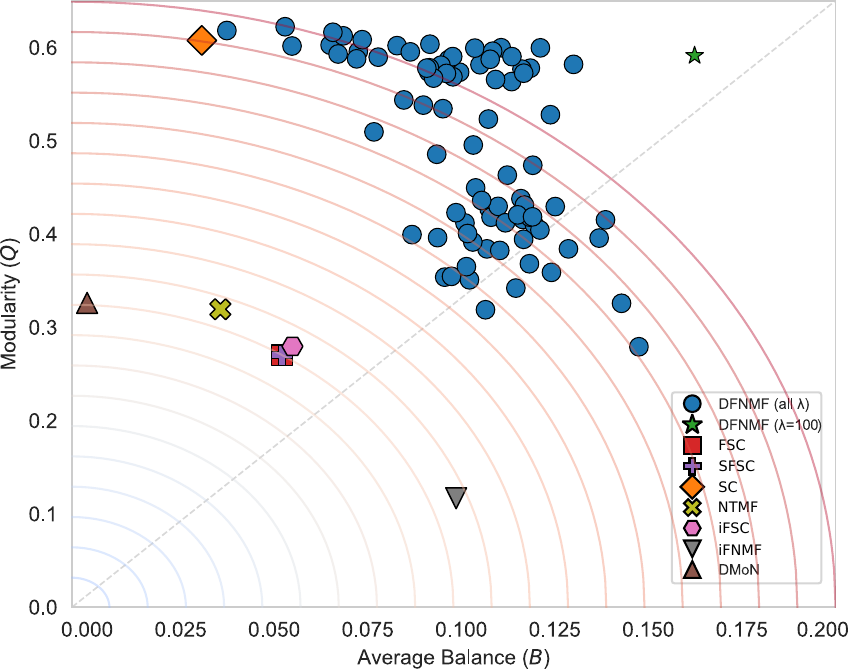}
    \vspace{2mm}
    \textbf{(a) DrugNet (k = 5)}
  \end{minipage}
  \hfill
  \begin{minipage}[b]{0.4\textwidth}
    \centering
    \includegraphics[width=\linewidth]{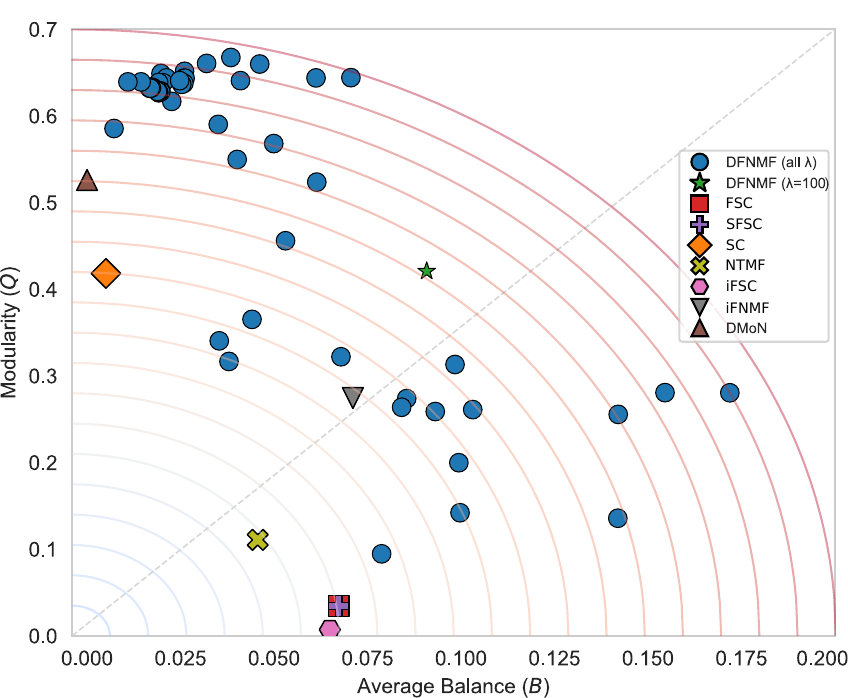}
    \vspace{2mm}
    \textbf{(b) LastFM (k = 5)}
  \end{minipage}
  \caption{Pareto plots for $k{=}5$. Blue: DFNMF across $\lambda\!\in\![10^{-3},10^{3}]$; green star: $\lambda^\star\!=\!100$ selected on the Pareto front via the ideal-point rule. Dashed identity lines mark balanced trade-offs; shaded curves indicate empirical fronts.}
  \label{fig:pareto_combined}
\end{figure}

\section{Conclusion and Outlook} \label{sec:conclusion}
We introduce DFNMF, an end-to-end deep matrix factorization framework that directly integrates fairness constraints into graph clustering. Unlike existing approaches that rely on rigid constraints or multi-stage pipelines, DFNMF enables flexible fairness-utility trade-offs through a single parameter while maintaining computational efficiency with near-linear scaling in graph edges. The nonnegative factorization provides inherent interpretability through parts-based decomposition, addressing a key limitation of spectral methods.

Our theoretical analysis establishes formal connections between matrix-based regularization and demographic balance constraints, providing principled foundations for the approach. Comprehensive experiments across synthetic and real networks demonstrate that DFNMF consistently achieves superior fairness-utility trade-offs, often dominating state-of-the-art methods on the Pareto front while maintaining scalability to large graphs through sparse matrix operations.

Several promising directions emerge from this work. \emph{Individual fairness} integration would ensure equitable treatment of similar nodes beyond group-level parity. \emph{Capacity constraints} could maintain balanced partition sizes while preserving structural coherence. While our linear formulation offers interpretability, \emph{neural extensions} could capture more complex combinatorial patterns—particularly through fair neural matrix factorization variants that maintain the end-to-end optimization benefits.

Broadly, it demonstrates the potential for principled fairness integration in graph learning tasks, opening avenues for fair community detection in sensitive domains such as healthcare networks, financial systems, and social platforms where demographic balance is algorithmically and socially critical.

\section{Acknowledgements}
This work has received funding from the European Union’s Horizon 2020 research and innovation programme under Marie Sklodowska-Curie Actions (grant agreement number 860630) for the project ‘’NoBIAS -- Artificial Intelligence without Bias’’. This work reflects only the authors’ views, and the European Research Executive Agency (REA) is not responsible for any use that may be made of the information it contains. The research was also supported by the EU Horizon Europe project MAMMOth (Grant Agreement 101070285) and the German priority program Human Decision under German research foundation DFG-SPP2443 (Grant Agreement 543081196).

\bibliography{Refs}

\clearpage  

\newpage

\section{Appendix 1: Theoretics and Differential Calculus}\label{app:discussion}

\subsection{Epistemic Comparison of NMF \& DNN}

In this section, we discuss theoretical and epistemic differences between NMF-based and deep neural network (DNN)-based clustering, focusing on comparative strengths, interpretability, and underlying philosophies.

NMF models naturally facilitate unsupervised and supervised scenarios, well-suited for multi-layer optimization. Their fundamental assumption—objects perceived as additive combinations of meaningful parts—aligns closely with human perception mechanisms~\cite{lee1999learning}. Nonnegative constraints further ensure interpretability; negative contributions are often meaningless in real-world tasks like face, image, or gene data analysis. Such nonnegative decompositions typically yield localized, semantically interpretable features (e.g., facial parts). Additionally, the inherent sparseness of NMF enhances representation, 
clearly distinguishing these models from purely distributed approaches.

While shallow NMFs are equivalent to single-layer perceptrons, differences emerge in multilayer architectures: NMFs remain linear reconstruction models, optimized by specialized multiplicative update rules~\cite{DBLP:journals/tnn/Lin07}, unlike nonlinear DNNs optimized via gradient-based chain rules. This distinction is not about determining a universally best method. Indeed, many successful DNN architectures utilize dot-product similarity mechanisms, alike factorization methods. Recent works indicate that NMF-based factorization models often provide interpretability and computational advantages in embedding-based tasks such as collaborative filtering~\cite{DBLP:conf/icml/XuRKKA21,DBLP:conf/recsys/RendleKZA20}. Consequently, emerging research increasingly blends NMF and DNN or extends NMF to deep hierarchical models to leverage complementary strengths and optimize application-specific trade-offs.

\subsection{Derivation of update formula}\label{app:der}
To calculate the gradient of the objective function in Eq.~\eqref{eq:Hi}, we first need to express the function as a trace expression. Then, we can solve Eq.~\eqref{eq:Hi} by introducing a Lagrangian multiplier matrix $\bm{\Theta}_i$ to ensure the nonnegativity constraints on $\bm{H}_i$. This results in an equivalent objective function as follows:
\begin{align}
	&\min_{\bm{H}_i,\bm{\Theta}_i} \mathcal{L}(\bm{H}_i,\bm{\Theta}_i) =
    \mathrm{Tr}( -2\bm{A}^{\top} \bm{\Psi}_{i} \bm{H}_{i}\bm{\Phi}_{i}\bm{W}_p\bm{\Phi}_{i}^{\top}\bm{H}_i^{\top}\bm{\Psi}_{i}^{\top}
    \nonumber\\& 
    +\bm{\Psi}_{i} \bm{H}_{i}\bm{\Phi}_{i}\bm{W}_p^{\top}\bm{\Phi}_{i}^{\top}\bm{H}_i^{\top}\bm{\Psi}_{i}^{\top}\bm{\Psi}_{i} \bm{H}_{i}\bm{\Phi}_{i}\bm{W}_p\bm{\Phi}_{i}^{\top}\bm{H}_i^{\top}\bm{\Psi}_{i}^{\top})
    \nonumber\\&
    +\lambda\mathrm{Tr}(\bm{\Phi}_{i}^{\top}\bm{H}_i^{\top}\bm{\Psi}_{i}^{\top}\bm{F}\bm{F}^\top\bm{\Psi}_{i} \bm{H}_{i}\bm{\Phi}_{i})
    -\mathrm{Tr}(\bm{\Theta}_i\bm{H}_i^{\top}).
\end{align}
By setting the partial derivative of $\mathcal{L}(\bm{H}_i,\bm{\Theta}_i)$ with respect to $\bm{H}_i$ to $\bm{0}$, we have:
\begin{align}
    \bm{\Theta}_i = \;
    &-2\bm{\Psi}_{i}^{\top}\bm{A}^{\top} \bm{\Psi}\bm{W}_p\bm{\Phi}_{i}^{\top}
    -2\bm{\Psi}_{i}^{\top}\bm{A} \bm{\Psi}\bm{W}_p^{\top}\bm{\Phi}_{i}^{\top} 
    \nonumber\\
    &+2\bm{\Psi}_{i}^{\top}\bm{\Psi} \left( \bm{W}_p^{\top}\bm{\Psi}^{\top}\bm{\Psi}\bm{W}_p + \bm{W}_p\bm{\Psi}^{\top}\bm{\Psi}\bm{W}_p^{\top} \right) \bm{\Phi}_{i}^{\top} \nonumber \\
    &+2\lambda\bm{\Psi}_{i}^{\top}\bm{FF}^\top\bm{\Psi}\bm{\Phi}_{i}^{\top}.
\end{align}

From the Karush-Kuhn-Tucker (KKT) complementary slackness conditions, we obtain $\bm{\Theta}_i\odot\bm{H}_i=\bm{0}$, which is the fixed point equation that the solution must satisfy at convergence. By solving it, we derive the following update rule for $\bm{H}_i$:
\begin{equation}
	\bm{H}_i\leftarrow\bm{H}_i\odot
\end{equation}
\begin{equation}
\scalebox{1.15}{$
        \left[\frac
	{\bm{\Psi}_{i}^{\top}(\bm{A}^{\top}\bm{\Psi}\bm{W}_p
		+\bm{A}\bm{\Psi}\bm{W}_p^{\top}+\lambda[\bm{F}\bm{F}^\top\bm{\Psi}]^-)\bm{\Phi}_{i}^{\top}}
	{\bm{\Psi}_{i}^{\top}(\bm{\Psi}\bm{W}_p^{\top}\bm{\Psi}^{\top}\bm{\Psi}\bm{W}_p
		+\bm{\Psi}\bm{W}_p\bm{\Psi}^{\top}\bm{\Psi}\bm{W}_p^{\top}+\lambda[\bm{F}\bm{F}^\top\bm{\Psi}]^+)\bm{\Phi}_{i}^{\top}}
	\right]^\frac{1}{4}$}
\nonumber
\end{equation}
where we separate the positive and negative parts of an arbitrary matrix $\bm{B}$ into \( \bm{B}^+ = \max(\bm{B}, 0) \) and \( \bm{B}^- = -\min(\bm{B}, 0) \), such that the main matrix is conveniently the addition of the negative and postive parts \( \bm{B} = \bm{B}^+ - \bm{B}^- \).

\section{Appendix 2: Ablation Studies}\label{app:ablation}

\subsection{Selecting \texorpdfstring{$\lambda^\star$}{lambda*} and Bracketing Values}
\label{app:lambda-star}

We sweep $\lambda\in\{10^{-3},10^{-2},\dots,10^3\}$ and, for each dataset (and the $k$ used in the main results), form the utility–fairness set $\{(Q(\lambda),\bar B(\lambda))\}$. We retain the Pareto front (undominated points), min–max scale $(Q,\bar B)$ to $[0,1]$ (per dataset and $k$), and select
\[
\lambda^\star \;=\; \arg\min_{\lambda \in \Lambda_{\mathrm{P}}} \bigl\|\,(\tilde Q(\lambda),\tilde B(\lambda))-(1,1)\,\bigr\|_2,
\]
with a tie–breaker preferring smaller $|\tilde Q-\tilde B|$ (closer to the identity guide). We also report a bracket, i.e., the nearest available grid values to $\{\lambda^\star/10,\,10\lambda^\star\}$, as a transparent operating band. Table~\ref{tab:lambda} summarizes $\lambda^\star$, the selected layer sizes, and the achieved $Q$ and $\bar B$. Moreover, it reports the bracketing values \(\lambda_\mathrm{lo}\) and \(\lambda_\mathrm{hi}\) related to each $\lambda^\star$ and the corresponding low and high $Q$ and $\bar B$ values.

\begin{table*}[!bht]
\centering
\footnotesize
\setlength{\tabcolsep}{6pt}           
\renewcommand{\arraystretch}{1.12}    
\caption{Selected $\lambda^\star$ per dataset (with layers) and bracketing values. 
$Q$ and $\bar B$ are raw scores at each $\lambda$; 
$\lambda_{\mathrm{lo}}$ and $\lambda_{\mathrm{hi}}$ are the nearest grid values to $\{\lambda^\star/10,\,10\lambda^\star\}$.}
\label{tab:lambda}
\begin{tabular}{l c c c c c c c c c c c}
\toprule
\textbf{Dataset} & \textbf{$k$} & \textbf{Layers@$\,\lambda^\star$} 
& $\boldsymbol{\lambda^\star}$ & $\boldsymbol{\lambda_{\mathrm{lo}}}$ & $\boldsymbol{\lambda_{\mathrm{hi}}}$ 
& $Q(\lambda^\star)$ & $\bar B(\lambda^\star)$ 
& $Q(\lambda_{\mathrm{lo}})$ & $\bar B(\lambda_{\mathrm{lo}})$ 
& $Q(\lambda_{\mathrm{hi}})$ & $\bar B(\lambda_{\mathrm{hi}})$ \\
\midrule
NBA       & 2 & [64, 2]         & 0.05   & 0.005  & 0.5    & 0.134  & 0.438  & 0.130  & 0.361  & 0.131  & 0.370 \\
Pokec-n   & 2 & [512, 16, 2]    & 10     & 10     & 10     & 0.164  & 0.177  & 0.164  & 0.177  & 0.164  & 0.177 \\
Pokec-z   & 2 & [512, 16, 2]    & 10     & 10     & 10     & 0.162  & 0.175  & 0.162  & 0.175  & 0.162  & 0.175 \\
Diaries   & 5 & [64, 16, 5]     & 50     & 5      & 500    & 0.716  & 0.787  & 0.745  & 0.688  & 0.517  & 0.802 \\
DrugNet   & 5 & [64, 5]         & 0.1    & 0.01   & 1      & 0.591  & 0.162  & 0.600  & 0.121  & 0.572  & 0.117 \\
Facebook  & 5 & [64, 5]         & 100    & 10     & 1000   & 0.503  & 0.768  & 0.512  & 0.638  & 0.461  & 0.752 \\
Friendship& 5 & [64, 5]         & 0.1    & 0.01   & 1      & 0.666  & 0.614  & 0.670  & 0.641  & 0.561  & 0.663 \\
LastFM    & 5 & [256, 64, 5]    & 0.005  & 0.001  & 0.05   & 0.420  & 0.091  & 0.660  & 0.046  & 0.629  & 0.020 \\
SBM       & 5 & [64, 16, 5]     & 0.5    & 0.05   & 5      & 0.114  & 1.000  & 0.114  & 1.000  & 0.078  & 0.756 \\
\bottomrule
\end{tabular}
\end{table*}

\paragraph{What the table shows}
(i) Datasets with strong community signal (e.g., SBM) select small $\lambda^\star$, preserving $Q$ while still improving $\bar B$; The same pattern applies to NBA, and LastFM (ii) highly imbalanced or sparse social graphs (e.g., Diaries and Facebook) push $\lambda^\star$ higher to achieve parity; (iii) for medium-scale, noisy graphs (DrugNet, Friendship), $\lambda^\star$ sits near $10^{-1}$, striking a balanced operating point. Brackets are tight where fronts are steep (clear trade-offs) and wider where fronts are flat (multiple near-equivalent settings). We use $\lambda^\star$ for main tables; Pareto plots in the paper illustrate the surrounding spectrum (including extreme settings within the bracket).

\subsection{Sensitivity to the Number of Clusters (\texorpdfstring{$k$}{k})}
\label{app:sensitivity-k}

\begin{table}[!hbpt]
\centering
\footnotesize
\setlength{\tabcolsep}{8pt}           
\renewcommand{\arraystretch}{1.12}    
\caption{\textbf{Sensitivity of DFNMF to $k$} through the lens of $\lambda^\star$ on DrugNet, LastFM, and Facebook.
Values reported at $\lambda^\star$ per $k$ (Pareto-selected).}
\label{tab:lambda_sensitivity}
\begin{tabular}{l c c c c c}
\toprule
\textbf{Dataset} & $\mathbf{k}$ & $\boldsymbol{\lambda^\star}$ & \textbf{Layers} & $\mathbf{Q}$ & $\mathbf{\bar B}$ \\
\midrule
\multirow{3}{*}{DrugNet}  & 3 & 0.01  & [64, 3]         & 0.482 & 0.170 \\
                          & 5 & 0.10  & [64, 5]         & 0.591 & 0.162 \\
                          & 8 & 0.50  & [64, 8]         & 0.456 & 0.158 \\
\midrule
\multirow{3}{*}{LastFM}   & 3 & 0.005 & [256, 64, 3]    & 0.516 & 0.147 \\
                          & 5 & 0.005 & [256, 64, 5]    & 0.420 & 0.091 \\
                          & 8 & 0.050 & [256, 64, 8]    & 0.384 & 0.090 \\
\midrule
\multirow{3}{*}{Facebook} & 3 & 60    & [64, 3]         & 0.408 & 0.818 \\
                          & 5 & 100   & [64, 5]         & 0.503 & 0.768 \\
                          & 8 & 150   & [64, 8]         & 0.437 & 0.652 \\
\bottomrule
\end{tabular}
\end{table}
\noindent\textbf{Reporting \& robustness.}
All entries are mean values over 10 random seeds (std.\ omitted for space); the same seeds are reused across $\lambda$ for a given dataset and $k$.
Although $\lambda^\star$ is selected using min--max--normalized $(Q,\bar B)$ per dataset and $k$, a check with a simple linear scalarization $0.5\,Q+0.5\,\bar B$ chose the same or bracket-adjacent setting in every case (i.e., within $\{\lambda^\star/10,\,10\lambda^\star\}$).
Width patterns follow fixed templates (e.g., $[64,k]$ or $[256,64,k]$) across $\lambda$; the observed increase of $\lambda^\star$ with $k$ persists under fixed-width variants, indicating the trend is driven by partition granularity rather than capacity.
For \textit{LastFM}, the steeper drop of $\bar B$ as $k$ grows aligns with its high homophily/sparsity: finer partitions leave fewer cross-group ties to balance without larger $\lambda$. The dynamics of $\lambda$ and $k$  are illustrated in Fig~\ref{fig:lambda_vs_k}.

\begin{figure}[!hbt]
\centering
\begin{tikzpicture}
\begin{axis}[
  width=\columnwidth,
  height=3.2cm,
  xmin=2.5,xmax=8.5,
  xtick={3,5,8},
  xlabel={$k$},
  ylabel={$\lambda^\star$ (log scale)},
  ymode=log,
  log basis y={10},
  legend style={at={(0.5,1.02)},anchor=south,legend columns=3},
  yminorgrids=true, xmajorgrids=true
]
\addplot+[mark=*] coordinates {(3,0.01) (5,0.1) (8,0.5)};
\addlegendentry{DrugNet}
\addplot+[mark=square*] coordinates {(3,0.005) (5,0.005) (8,0.05)};
\addlegendentry{LastFM}
\addplot+[mark=triangle*] coordinates {(3,60) (5,100) (8,150)};
\addlegendentry{Facebook}
\end{axis}
\end{tikzpicture}
\caption{Selected $\lambda^\star$ vs.\ $k$ (log-scale on $y$).}
\label{fig:lambda_vs_k}
\end{figure}
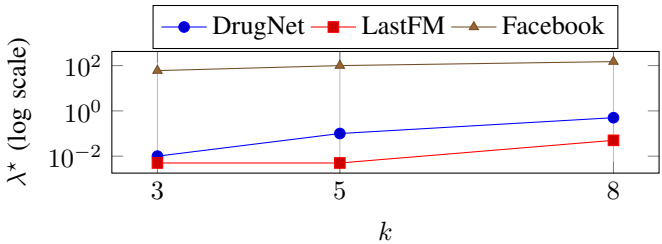

Table~\ref{tab:lambda_sensitivity} examines how the selected $\lambda^\star$ and the attained $(Q,\bar B)$ evolve as $k$ increases from $3$ to $8$. Three consistent trends emerge. 
(1) \textit{$\lambda^\star$ generally rises with $k$}, reflecting that enforcing proportional representation becomes harder when clusters are smaller: Facebook shows a monotone increase ($60\!\to\!100\!\to\!150$); DrugNet increases then plateaus ($0.01\!\to\!0.10\!\to\!0.50$); LastFM is stable at small $k$ and rises at $k{=}8$ ($0.005\!\to\!0.005\!\to\!0.05$). 
(2) \textit{Modularity often peaks at a moderate $k$}: DrugNet and Facebook achieve their best $Q$ at $k{=}5$ (0.591 and 0.503, respectively) and then drop at $k{=}8$, while LastFM (sparser, more homophilous) shows a steady decline ($0.516\!\to\!0.420\!\to\!0.384$) as partitions become finer. 
(3) \textit{Balance degrades as $k$ grows}: the drop is mild on DrugNet (0.170$\to$0.162$\to$0.158), pronounced on Facebook (0.818$\to$0.768$\to$0.652), and sharp early on for LastFM (0.147$\to$0.091$\to$0.090), consistent with tighter per-cluster parity constraints at smaller cluster sizes. 
Layer choices at $\lambda^\star$ remain compact and systematic—$[64,k]$ for DrugNet/Facebook and $[256,64,k]$ for LastFM—supporting reproducible configurations across $k$.

\paragraph{Takeaway.}
As $k$ increases, maintaining per-cluster parity becomes more challenging: $\lambda^\star$ typically needs to grow, $\bar B$ tends to fall, and $Q$ often peaks around a moderate $k$. Reporting $\lambda^\star$ alongside $(Q,\bar B)$ at each $k$ provides a transparent, reproducible operating point per dataset.

\subsection{Extended Interpretability Rsults} \label{app:interpret}
This appendix complements Sec.~\ref{sec:interpret} by reporting the \emph{full} soft-membership matrices used in the 60-node example (Fig.~\ref{fig:interpret}). Recall the hierarchical mapping
\(\bm{\Psi}=\bm{H}_1\bm{H}_2\), where \(\bm{H}_1\) encodes node\(\rightarrow\)micro-cluster affinities and \(\bm{H}_2\) maps micro-clusters\(\rightarrow\)communities. For visual context in the main text, see the paired depiction of \(\bm{H}_2\) and the trimmed \(\bm{\Psi}\) in Fig.~\ref{fig:H1H2}, and the compact slice of \(\bm{H}_1\) in Table~\ref{tab:H1-compact}. The full results are now illustrated in Tables~\ref{tab:H1-full}, \ref{tab:H2}, and \ref{tab:Psi-full}

\paragraph{Reading the matrices.}
Entries are nonnegative \emph{soft memberships} (not necessarily normalized). Two patterns underpin interpretability: (i) \textbf{micro-sparsity}—rows of \(\bm{H}_1\) are typically few-peaked, so most nodes participate in a small number of micro-clusters (but this highly depends on graph sructure and properties); and (ii) \textbf{structured aggregation}—columns of \(\bm{H}_2\) are near one-hot for core micro-clusters (e.g., A\(\rightarrow\)I, B\(\rightarrow\)II, C\(\rightarrow\)III), while boundary micro-clusters spread mass across communities. Consequently, \(\bm{\Psi}=\bm{H}_1\bm{H}_2\) consolidates concentrated rows into near one-hot community affinities, while mixed rows remain soft. This traceability (node\(\rightarrow\)micro\(\rightarrow\)community) enables transparent auditing of how local structure (under fairness regularization) aggregates into global communities.

\begin{table}[!htbp]
\centering
\caption{Full node\(\rightarrow\)micro-cluster soft memberships (\(\bm{H}_1\), micro-clusters A–L).}
\label{tab:H1-full}
\begin{tabular}{c|cccccccccccc}
\hline
\multicolumn{1}{c|}{\multirow{2}{*}{Node}} & \multicolumn{12}{c}{Micro-clusters}           \\ \cline{2-13} 
\multicolumn{1}{c|}{}                           & A & B & C & D & E & F & G & H & I & J & K & L \\ \hline
1 & 0.26 & \textcolor{gray!30}{0.00} & \textcolor{gray!30}{0.00} & \textcolor{gray!30}{0.00} & \textcolor{gray!30}{0.00} & 0.04 & \textcolor{gray!30}{0.00} & \textcolor{gray!30}{0.00} & 0.01 & \textcolor{gray!30}{0.00} & \textcolor{gray!30}{0.00} & \textcolor{gray!30}{0.00} \\
2 & 0.25 & \textcolor{gray!30}{0.00} & 0.01 & \textcolor{gray!30}{0.00} & 0.01 & 0.09 & 0.02 & \textcolor{gray!30}{0.00} & 0.01 & \textcolor{gray!30}{0.00} & 0.04 & \textcolor{gray!30}{0.00} \\
3 & 0.21 & \textcolor{gray!30}{0.00} & 0.01 & \textcolor{gray!30}{0.00} & 0.09 & 0.03 & \textcolor{gray!30}{0.00} & \textcolor{gray!30}{0.00} & \textcolor{gray!30}{0.00} & \textcolor{gray!30}{0.00} & 0.11 & \textcolor{gray!30}{0.00} \\
4 & 0.23 & \textcolor{gray!30}{0.00} & \textcolor{gray!30}{0.00} & \textcolor{gray!30}{0.00} & \textcolor{gray!30}{0.00} & 0.03 & \textcolor{gray!30}{0.00} & \textcolor{gray!30}{0.00} & \textcolor{gray!30}{0.00} & \textcolor{gray!30}{0.00} & \textcolor{gray!30}{0.00} & \textcolor{gray!30}{0.00} \\
5 & 0.31 & \textcolor{gray!30}{0.00} & 0.02 & \textcolor{gray!30}{0.00} & 0.01 & 0.15 & 0.05 & 0.10 & 0.05 & 0.02 & 0.09 & \textcolor{gray!30}{0.00} \\
6 & 0.05 & \textcolor{gray!30}{0.00} & \textcolor{gray!30}{0.00} & \textcolor{gray!30}{0.00} & \textcolor{gray!30}{0.00} & 0.26 & 0.61 & \textcolor{gray!30}{0.00} & \textcolor{gray!30}{0.00} & \textcolor{gray!30}{0.00} & \textcolor{gray!30}{0.00} & \textcolor{gray!30}{0.00} \\
7 & 0.03 & \textcolor{gray!30}{0.00} & \textcolor{gray!30}{0.00} & \textcolor{gray!30}{0.00} & \textcolor{gray!30}{0.00} & 0.17 & 0.81 & \textcolor{gray!30}{0.00} & \textcolor{gray!30}{0.00} & \textcolor{gray!30}{0.00} & \textcolor{gray!30}{0.00} & \textcolor{gray!30}{0.00} \\
8 & 0.01 & \textcolor{gray!30}{0.00} & \textcolor{gray!30}{0.00} & \textcolor{gray!30}{0.00} & \textcolor{gray!30}{0.00} & 0.04 & 0.81 & \textcolor{gray!30}{0.00} & \textcolor{gray!30}{0.00} & \textcolor{gray!30}{0.00} & \textcolor{gray!30}{0.00} & \textcolor{gray!30}{0.00} \\
9 & 0.19 & \textcolor{gray!30}{0.00} & \textcolor{gray!30}{0.00} & \textcolor{gray!30}{0.00} & \textcolor{gray!30}{0.00} & 0.33 & 0.23 & \textcolor{gray!30}{0.00} & 0.01 & \textcolor{gray!30}{0.00} & \textcolor{gray!30}{0.00} & \textcolor{gray!30}{0.00} \\
10 & 0.11 & \textcolor{gray!30}{0.00} & 0.03 & \textcolor{gray!30}{0.00} & \textcolor{gray!30}{0.00} & 0.16 & 0.52 & \textcolor{gray!30}{0.00} & \textcolor{gray!30}{0.00} & \textcolor{gray!30}{0.00} & 0.07 & \textcolor{gray!30}{0.00} \\
11 & 0.10 & 0.02 & \textcolor{gray!30}{0.00} & \textcolor{gray!30}{0.00} & \textcolor{gray!30}{0.00} & 0.05 & \textcolor{gray!30}{0.00} & 0.50 & 0.03 & \textcolor{gray!30}{0.00} & \textcolor{gray!30}{0.00} & \textcolor{gray!30}{0.00} \\
12 & 0.06 & \textcolor{gray!30}{0.00} & \textcolor{gray!30}{0.00} & \textcolor{gray!30}{0.00} & \textcolor{gray!30}{0.00} & 0.03 & \textcolor{gray!30}{0.00} & 0.63 & 0.08 & \textcolor{gray!30}{0.00} & \textcolor{gray!30}{0.00} & \textcolor{gray!30}{0.00} \\
13 & 0.16 & 0.10 & 0.03 & \textcolor{gray!30}{0.00} & \textcolor{gray!30}{0.00} & 0.06 & \textcolor{gray!30}{0.00} & 0.54 & 0.23 & 0.04 & 0.15 & 0.03 \\
14 & 0.07 & 0.06 & \textcolor{gray!30}{0.00} & \textcolor{gray!30}{0.00} & \textcolor{gray!30}{0.00} & 0.03 & \textcolor{gray!30}{0.00} & 0.56 & 0.05 & \textcolor{gray!30}{0.00} & \textcolor{gray!30}{0.00} & \textcolor{gray!30}{0.00} \\
15 & 0.14 & 0.04 & \textcolor{gray!30}{0.00} & \textcolor{gray!30}{0.00} & 0.02 & 0.18 & 0.07 & 0.64 & 0.05 & 0.01 & \textcolor{gray!30}{0.00} & 0.01 \\
16 & 0.13 & \textcolor{gray!30}{0.00} & \textcolor{gray!30}{0.00} & \textcolor{gray!30}{0.00} & \textcolor{gray!30}{0.00} & 0.05 & \textcolor{gray!30}{0.00} & \textcolor{gray!30}{0.00} & 0.03 & \textcolor{gray!30}{0.00} & \textcolor{gray!30}{0.00} & \textcolor{gray!30}{0.00} \\
17 & 0.06 & \textcolor{gray!30}{0.00} & \textcolor{gray!30}{0.00} & \textcolor{gray!30}{0.00} & \textcolor{gray!30}{0.00} & 0.03 & 0.10 & \textcolor{gray!30}{0.00} & 0.04 & \textcolor{gray!30}{0.00} & \textcolor{gray!30}{0.00} & 0.07 \\
18 & 0.19 & \textcolor{gray!30}{0.00} & \textcolor{gray!30}{0.00} & \textcolor{gray!30}{0.00} & \textcolor{gray!30}{0.00} & 0.09 & 0.01 & \textcolor{gray!30}{0.00} & 0.01 & \textcolor{gray!30}{0.00} & \textcolor{gray!30}{0.00} & \textcolor{gray!30}{0.00} \\
19 & 0.20 & \textcolor{gray!30}{0.00} & \textcolor{gray!30}{0.00} & \textcolor{gray!30}{0.00} & \textcolor{gray!30}{0.00} & 0.22 & 0.02 & \textcolor{gray!30}{0.00} & 0.02 & \textcolor{gray!30}{0.00} & \textcolor{gray!30}{0.00} & \textcolor{gray!30}{0.00} \\
20 & 0.22 & \textcolor{gray!30}{0.00} & 0.03 & \textcolor{gray!30}{0.00} & 0.02 & 0.04 & \textcolor{gray!30}{0.00} & \textcolor{gray!30}{0.00} & 0.11 & \textcolor{gray!30}{0.00} & 0.06 & 0.65 \\
21 & \textcolor{gray!30}{0.00} & 0.06 & \textcolor{gray!30}{0.00} & 0.35 & \textcolor{gray!30}{0.00} & \textcolor{gray!30}{0.00} & \textcolor{gray!30}{0.00} & \textcolor{gray!30}{0.00} & \textcolor{gray!30}{0.00} & 0.31 & \textcolor{gray!30}{0.00} & \textcolor{gray!30}{0.00} \\
22 & 0.03 & \textcolor{gray!30}{0.00} & \textcolor{gray!30}{0.00} & 0.28 & \textcolor{gray!30}{0.00} & 0.05 & \textcolor{gray!30}{0.00} & \textcolor{gray!30}{0.00} & 0.01 & 0.50 & \textcolor{gray!30}{0.00} & 0.05 \\
23 & \textcolor{gray!30}{0.00} & \textcolor{gray!30}{0.00} & \textcolor{gray!30}{0.00} & 0.31 & \textcolor{gray!30}{0.00} & \textcolor{gray!30}{0.00} & \textcolor{gray!30}{0.00} & \textcolor{gray!30}{0.00} & \textcolor{gray!30}{0.00} & 0.57 & \textcolor{gray!30}{0.00} & \textcolor{gray!30}{0.00} \\
24 & \textcolor{gray!30}{0.00} & \textcolor{gray!30}{0.00} & \textcolor{gray!30}{0.00} & 0.05 & \textcolor{gray!30}{0.00} & \textcolor{gray!30}{0.00} & \textcolor{gray!30}{0.00} & \textcolor{gray!30}{0.00} & \textcolor{gray!30}{0.00} & 0.57 & \textcolor{gray!30}{0.00} & \textcolor{gray!30}{0.00} \\
25 & \textcolor{gray!30}{0.00} & \textcolor{gray!30}{0.00} & 0.01 & 0.40 & 0.04 & \textcolor{gray!30}{0.00} & \textcolor{gray!30}{0.00} & \textcolor{gray!30}{0.00} & \textcolor{gray!30}{0.00} & 0.86 & \textcolor{gray!30}{0.00} & \textcolor{gray!30}{0.00} \\
26 & \textcolor{gray!30}{0.00} & 0.22 & \textcolor{gray!30}{0.00} & 0.07 & \textcolor{gray!30}{0.00} & \textcolor{gray!30}{0.00} & \textcolor{gray!30}{0.00} & \textcolor{gray!30}{0.00} & \textcolor{gray!30}{0.00} & \textcolor{gray!30}{0.00} & \textcolor{gray!30}{0.00} & \textcolor{gray!30}{0.00} \\
27 & \textcolor{gray!30}{0.00} & 0.21 & \textcolor{gray!30}{0.00} & 0.05 & \textcolor{gray!30}{0.00} & 0.02 & \textcolor{gray!30}{0.00} & 0.08 & 0.01 & \textcolor{gray!30}{0.00} & \textcolor{gray!30}{0.00} & \textcolor{gray!30}{0.00} \\
28 & \textcolor{gray!30}{0.00} & 0.23 & \textcolor{gray!30}{0.00} & 0.01 & \textcolor{gray!30}{0.00} & \textcolor{gray!30}{0.00} & \textcolor{gray!30}{0.00} & \textcolor{gray!30}{0.00} & \textcolor{gray!30}{0.00} & \textcolor{gray!30}{0.00} & \textcolor{gray!30}{0.00} & \textcolor{gray!30}{0.00} \\
29 & \textcolor{gray!30}{0.00} & 0.28 & 0.01 & 0.02 & 0.06 & \textcolor{gray!30}{0.00} & \textcolor{gray!30}{0.00} & 0.02 & \textcolor{gray!30}{0.00} & \textcolor{gray!30}{0.00} & 0.12 & \textcolor{gray!30}{0.00} \\
30 & \textcolor{gray!30}{0.00} & 0.20 & \textcolor{gray!30}{0.00} & \textcolor{gray!30}{0.00} & \textcolor{gray!30}{0.00} & \textcolor{gray!30}{0.00} & \textcolor{gray!30}{0.00} & \textcolor{gray!30}{0.00} & \textcolor{gray!30}{0.00} & \textcolor{gray!30}{0.00} & \textcolor{gray!30}{0.00} & \textcolor{gray!30}{0.00} \\
31 & \textcolor{gray!30}{0.00} & 0.23 & \textcolor{gray!30}{0.00} & \textcolor{gray!30}{0.00} & \textcolor{gray!30}{0.00} & \textcolor{gray!30}{0.00} & \textcolor{gray!30}{0.00} & 0.04 & 0.04 & \textcolor{gray!30}{0.00} & \textcolor{gray!30}{0.00} & \textcolor{gray!30}{0.00} \\
32 & \textcolor{gray!30}{0.00} & 0.30 & \textcolor{gray!30}{0.00} & \textcolor{gray!30}{0.00} & \textcolor{gray!30}{0.00} & \textcolor{gray!30}{0.00} & \textcolor{gray!30}{0.00} & \textcolor{gray!30}{0.00} & \textcolor{gray!30}{0.00} & \textcolor{gray!30}{0.00} & \textcolor{gray!30}{0.00} & \textcolor{gray!30}{0.00} \\
33 & \textcolor{gray!30}{0.00} & 0.26 & \textcolor{gray!30}{0.00} & \textcolor{gray!30}{0.00} & \textcolor{gray!30}{0.00} & \textcolor{gray!30}{0.00} & \textcolor{gray!30}{0.00} & \textcolor{gray!30}{0.00} & \textcolor{gray!30}{0.00} & \textcolor{gray!30}{0.00} & \textcolor{gray!30}{0.00} & \textcolor{gray!30}{0.00} \\
34 & \textcolor{gray!30}{0.00} & 0.23 & \textcolor{gray!30}{0.00} & \textcolor{gray!30}{0.00} & \textcolor{gray!30}{0.00} & \textcolor{gray!30}{0.00} & \textcolor{gray!30}{0.00} & \textcolor{gray!30}{0.00} & \textcolor{gray!30}{0.00} & \textcolor{gray!30}{0.00} & \textcolor{gray!30}{0.00} & \textcolor{gray!30}{0.00} \\
35 & \textcolor{gray!30}{0.00} & 0.26 & \textcolor{gray!30}{0.00} & \textcolor{gray!30}{0.00} & \textcolor{gray!30}{0.00} & \textcolor{gray!30}{0.00} & \textcolor{gray!30}{0.00} & \textcolor{gray!30}{0.00} & \textcolor{gray!30}{0.00} & \textcolor{gray!30}{0.00} & \textcolor{gray!30}{0.00} & \textcolor{gray!30}{0.00} \\
36 & \textcolor{gray!30}{0.00} & 0.10 & \textcolor{gray!30}{0.00} & 0.13 & \textcolor{gray!30}{0.00} & \textcolor{gray!30}{0.00} & \textcolor{gray!30}{0.00} & \textcolor{gray!30}{0.00} & \textcolor{gray!30}{0.00} & \textcolor{gray!30}{0.00} & \textcolor{gray!30}{0.00} & \textcolor{gray!30}{0.00} \\
37 & \textcolor{gray!30}{0.00} & \textcolor{gray!30}{0.00} & \textcolor{gray!30}{0.00} & 0.84 & 0.01 & \textcolor{gray!30}{0.00} & 0.10 & \textcolor{gray!30}{0.00} & \textcolor{gray!30}{0.00} & 0.07 & 0.01 & \textcolor{gray!30}{0.00} \\
38 & \textcolor{gray!30}{0.00} & 0.01 & \textcolor{gray!30}{0.00} & 0.54 & \textcolor{gray!30}{0.00} & \textcolor{gray!30}{0.00} & \textcolor{gray!30}{0.00} & \textcolor{gray!30}{0.00} & \textcolor{gray!30}{0.00} & \textcolor{gray!30}{0.00} & \textcolor{gray!30}{0.00} & \textcolor{gray!30}{0.00} \\
39 & \textcolor{gray!30}{0.00} & 0.05 & \textcolor{gray!30}{0.00} & 0.07 & \textcolor{gray!30}{0.00} & \textcolor{gray!30}{0.00} & \textcolor{gray!30}{0.00} & \textcolor{gray!30}{0.00} & \textcolor{gray!30}{0.00} & \textcolor{gray!30}{0.00} & \textcolor{gray!30}{0.00} & \textcolor{gray!30}{0.00} \\
40 & \textcolor{gray!30}{0.00} & 0.09 & \textcolor{gray!30}{0.00} & 0.09 & \textcolor{gray!30}{0.00} & \textcolor{gray!30}{0.00} & \textcolor{gray!30}{0.00} & \textcolor{gray!30}{0.00} & \textcolor{gray!30}{0.00} & \textcolor{gray!30}{0.00} & \textcolor{gray!30}{0.00} & \textcolor{gray!30}{0.00} \\
41 & \textcolor{gray!30}{0.00} & \textcolor{gray!30}{0.00} & 0.18 & \textcolor{gray!30}{0.00} & \textcolor{gray!30}{0.00} & \textcolor{gray!30}{0.00} & \textcolor{gray!30}{0.00} & \textcolor{gray!30}{0.00} & \textcolor{gray!30}{0.00} & \textcolor{gray!30}{0.00} & \textcolor{gray!30}{0.00} & \textcolor{gray!30}{0.00} \\
42 & \textcolor{gray!30}{0.00} & \textcolor{gray!30}{0.00} & 0.33 & \textcolor{gray!30}{0.00} & \textcolor{gray!30}{0.00} & \textcolor{gray!30}{0.00} & \textcolor{gray!30}{0.00} & \textcolor{gray!30}{0.00} & \textcolor{gray!30}{0.00} & \textcolor{gray!30}{0.00} & \textcolor{gray!30}{0.00} & \textcolor{gray!30}{0.00} \\
43 & 0.01 & \textcolor{gray!30}{0.00} & 0.24 & \textcolor{gray!30}{0.00} & \textcolor{gray!30}{0.00} & \textcolor{gray!30}{0.00} & \textcolor{gray!30}{0.00} & \textcolor{gray!30}{0.00} & \textcolor{gray!30}{0.00} & \textcolor{gray!30}{0.00} & \textcolor{gray!30}{0.00} & 0.24 \\
44 & \textcolor{gray!30}{0.00} & \textcolor{gray!30}{0.00} & 0.30 & \textcolor{gray!30}{0.00} & \textcolor{gray!30}{0.00} & \textcolor{gray!30}{0.00} & \textcolor{gray!30}{0.00} & \textcolor{gray!30}{0.00} & \textcolor{gray!30}{0.00} & \textcolor{gray!30}{0.00} & \textcolor{gray!30}{0.00} & \textcolor{gray!30}{0.00} \\
45 & \textcolor{gray!30}{0.00} & \textcolor{gray!30}{0.00} & 0.26 & \textcolor{gray!30}{0.00} & \textcolor{gray!30}{0.00} & \textcolor{gray!30}{0.00} & \textcolor{gray!30}{0.00} & \textcolor{gray!30}{0.00} & \textcolor{gray!30}{0.00} & \textcolor{gray!30}{0.00} & 0.01 & \textcolor{gray!30}{0.00} \\
46 & 0.03 & 0.01 & 0.01 & 0.02 & 0.53 & 0.01 & \textcolor{gray!30}{0.00} & \textcolor{gray!30}{0.00} & 0.20 & 0.01 & 0.03 & 0.07 \\
47 & \textcolor{gray!30}{0.00} & 0.05 & 0.07 & 0.01 & 0.23 & \textcolor{gray!30}{0.00} & \textcolor{gray!30}{0.00} & \textcolor{gray!30}{0.00} & \textcolor{gray!30}{0.00} & \textcolor{gray!30}{0.00} & 0.08 & \textcolor{gray!30}{0.00} \\
48 & \textcolor{gray!30}{0.00} & \textcolor{gray!30}{0.00} & 0.06 & 0.17 & 0.58 & \textcolor{gray!30}{0.00} & \textcolor{gray!30}{0.00} & \textcolor{gray!30}{0.00} & 0.01 & 0.10 & 0.04 & \textcolor{gray!30}{0.00} \\
49 & 0.03 & 0.01 & 0.10 & 0.02 & 0.63 & 0.02 & \textcolor{gray!30}{0.00} & \textcolor{gray!30}{0.00} & 0.07 & \textcolor{gray!30}{0.00} & 0.10 & \textcolor{gray!30}{0.00} \\
50 & 0.04 & \textcolor{gray!30}{0.00} & 0.16 & \textcolor{gray!30}{0.00} & 0.15 & 0.05 & \textcolor{gray!30}{0.00} & \textcolor{gray!30}{0.00} & 0.02 & \textcolor{gray!30}{0.00} & 0.11 & \textcolor{gray!30}{0.00} \\
51 & 0.01 & \textcolor{gray!30}{0.00} & \textcolor{gray!30}{0.00} & \textcolor{gray!30}{0.00} & 0.64 & 0.17 & 0.10 & 0.21 & 0.05 & \textcolor{gray!30}{0.00} & \textcolor{gray!30}{0.00} & 0.35 \\
52 & \textcolor{gray!30}{0.00} & \textcolor{gray!30}{0.00} & \textcolor{gray!30}{0.00} & \textcolor{gray!30}{0.00} & 0.27 & \textcolor{gray!30}{0.00} & 0.02 & 0.03 & \textcolor{gray!30}{0.00} & \textcolor{gray!30}{0.00} & \textcolor{gray!30}{0.00} & 0.40 \\
53 & \textcolor{gray!30}{0.00} & \textcolor{gray!30}{0.00} & 0.05 & \textcolor{gray!30}{0.00} & 0.25 & \textcolor{gray!30}{0.00} & 0.01 & 0.01 & \textcolor{gray!30}{0.00} & \textcolor{gray!30}{0.00} & \textcolor{gray!30}{0.00} & 0.34 \\
54 & \textcolor{gray!30}{0.00} & \textcolor{gray!30}{0.00} & \textcolor{gray!30}{0.00} & \textcolor{gray!30}{0.00} & 0.11 & 0.01 & 0.02 & 0.07 & 0.01 & \textcolor{gray!30}{0.00} & \textcolor{gray!30}{0.00} & 0.06 \\
55 & 0.03 & \textcolor{gray!30}{0.00} & \textcolor{gray!30}{0.00} & \textcolor{gray!30}{0.00} & 0.32 & 0.01 & 0.01 & \textcolor{gray!30}{0.00} & \textcolor{gray!30}{0.00} & \textcolor{gray!30}{0.00} & 0.01 & 0.31 \\
56 & 0.01 & \textcolor{gray!30}{0.00} & 0.24 & \textcolor{gray!30}{0.00} & \textcolor{gray!30}{0.00} & 0.04 & 0.13 & \textcolor{gray!30}{0.00} & 0.01 & \textcolor{gray!30}{0.00} & 0.25 & \textcolor{gray!30}{0.00} \\
57 & 0.06 & \textcolor{gray!30}{0.00} & 0.26 & \textcolor{gray!30}{0.00} & \textcolor{gray!30}{0.00} & 0.05 & 0.01 & 0.09 & 0.06 & 0.01 & 0.26 & \textcolor{gray!30}{0.00} \\
58 & \textcolor{gray!30}{0.00} & \textcolor{gray!30}{0.00} & 0.22 & \textcolor{gray!30}{0.00} & \textcolor{gray!30}{0.00} & \textcolor{gray!30}{0.00} & \textcolor{gray!30}{0.00} & \textcolor{gray!30}{0.00} & \textcolor{gray!30}{0.00} & \textcolor{gray!30}{0.00} & 0.24 & \textcolor{gray!30}{0.00} \\
59 & \textcolor{gray!30}{0.00} & \textcolor{gray!30}{0.00} & 0.31 & \textcolor{gray!30}{0.00} & \textcolor{gray!30}{0.00} & \textcolor{gray!30}{0.00} & \textcolor{gray!30}{0.00} & \textcolor{gray!30}{0.00} & \textcolor{gray!30}{0.00} & \textcolor{gray!30}{0.00} & 0.10 & \textcolor{gray!30}{0.00} \\
60 & \textcolor{gray!30}{0.00} & \textcolor{gray!30}{0.00} & 0.19 & \textcolor{gray!30}{0.00} & \textcolor{gray!30}{0.00} & \textcolor{gray!30}{0.00} & \textcolor{gray!30}{0.00} & \textcolor{gray!30}{0.00} & \textcolor{gray!30}{0.00} & \textcolor{gray!30}{0.00} & 0.04 & \textcolor{gray!30}{0.00} \\\hline
\label{tab:H1}
\end{tabular}
\end{table}

\begin{table}[!htbp]
    \centering
    \caption{Micro-cluster membership values of $\bm{H}_2$ across the three main clusters (I–III).}
    \label{tab:H2}
    \small
    \begin{tabular}{c|ccc}
    \toprule
    \multicolumn{1}{c|}{\multirow{2}{*}{\makecell{Micro\\cluster}}} & \multicolumn{3}{c}{Clusters} \\
    \cmidrule(lr){2-4}
         & I & II & III \\
    \midrule
    A & 1.04 & \textcolor{gray!30}{0.00} & \textcolor{gray!30}{0.00} \\
    B & \textcolor{gray!30}{0.00} & 1.00 & \textcolor{gray!30}{0.00} \\
    C & \textcolor{gray!30}{0.00} & \textcolor{gray!30}{0.00} & 1.00 \\
    D & \textcolor{gray!30}{0.00} & 0.05 & 0.01 \\
    E & 0.02 & \textcolor{gray!30}{0.00} & 0.19 \\
    F & \textcolor{gray!30}{0.00} & \textcolor{gray!30}{0.00} & \textcolor{gray!30}{0.00} \\
    G & 0.08 & \textcolor{gray!30}{0.00} & \textcolor{gray!30}{0.00} \\
    H & 0.06 & 0.08 & \textcolor{gray!30}{0.00} \\
    I & \textcolor{gray!30}{0.00} & \textcolor{gray!30}{0.00} & \textcolor{gray!30}{0.00} \\
    J & \textcolor{gray!30}{0.00} & 0.02 & 0.06 \\
    K & \textcolor{gray!30}{0.00} & \textcolor{gray!30}{0.00} & 0.01 \\
    L & 0.02 & \textcolor{gray!30}{0.00} & 0.01 \\
    \bottomrule
    \end{tabular}
\end{table}

\begin{table}[!htbp] 
\centering 
\caption{Full node\(\rightarrow\)community soft memberships (\(\bm{\Psi}=\bm{H}_1\bm{H}_2\), communities I–III).}
\label{tab:Psi-full}
\begin{tabular}{c|ccc} 
\hline 
\multicolumn{1}{c|}{\multirow{2}{*}{Node}} & \multicolumn{3}{c}{Clusters} \\ 
\cline{2-4} & I & II & III \\
\hline 
1 & 0.27 & \textcolor{gray!30}{0.00} & \textcolor{gray!30}{0.00} \\ 
2 & 0.26 & \textcolor{gray!30}{0.00} & 0.01 \\ 
3 & 0.22 & \textcolor{gray!30}{0.00} & 0.03 \\
4 & 0.24 & \textcolor{gray!30}{0.00} & \textcolor{gray!30}{0.00} \\ 
5 & 0.33 & 0.01 & 0.02 \\
6 & 0.10 & \textcolor{gray!30}{0.00} & \textcolor{gray!30}{0.00} \\
7 & 0.10 & \textcolor{gray!30}{0.00} & \textcolor{gray!30}{0.00} \\ 
8 & 0.08 & \textcolor{gray!30}{0.00} & \textcolor{gray!30}{0.00} \\ 
9 & 0.22 & \textcolor{gray!30}{0.00} & \textcolor{gray!30}{0.00} \\ 
10 & 0.16 & \textcolor{gray!30}{0.00} & 0.03 \\
11 & 0.13 & 0.06 & \textcolor{gray!30}{0.00} \\
12 & 0.10 & 0.05 & \textcolor{gray!30}{0.00} \\
13 & 0.20 & 0.14 & 0.03 \\
14 & 0.11 & 0.10 & \textcolor{gray!30}{0.00} \\
15 & 0.19 & 0.09 & \textcolor{gray!30}{0.00} \\
16 & 0.14 & \textcolor{gray!30}{0.00} & \textcolor{gray!30}{0.00} \\ 
17 & 0.07 & \textcolor{gray!30}{0.00} & \textcolor{gray!30}{0.00} \\ 
18 & 0.20 & \textcolor{gray!30}{0.00} & \textcolor{gray!30}{0.00} \\
19 & 0.21 & \textcolor{gray!30}{0.00} & \textcolor{gray!30}{0.00} \\
20 & 0.24 & \textcolor{gray!30}{0.00} & 0.04 \\ 
21 & \textcolor{gray!30}{0.00} & 0.08 & 0.02 \\
22 & 0.03 & 0.02 & 0.03 \\
23 & \textcolor{gray!30}{0.00} & 0.03 & 0.04 \\
24 & \textcolor{gray!30}{0.00} & 0.01 & 0.03 \\
25 & \textcolor{gray!30}{0.00} & 0.04 & 0.07 \\
26 & \textcolor{gray!30}{0.00} & 0.22 & \textcolor{gray!30}{0.00} \\
27 & \textcolor{gray!30}{0.00} & 0.22 & \textcolor{gray!30}{0.00} \\
28 & \textcolor{gray!30}{0.00} & 0.23 & \textcolor{gray!30}{0.00} \\
29 & \textcolor{gray!30}{0.00} & 0.28 & 0.02 \\
30 & \textcolor{gray!30}{0.00} & 0.20 & \textcolor{gray!30}{0.00} \\
31 & \textcolor{gray!30}{0.00} & 0.23 & \textcolor{gray!30}{0.00} \\
32 & \textcolor{gray!30}{0.00} & 0.30 & \textcolor{gray!30}{0.00} \\
33 & \textcolor{gray!30}{0.00} & 0.26 & \textcolor{gray!30}{0.00} \\
34 & \textcolor{gray!30}{0.00} & 0.23 & \textcolor{gray!30}{0.00} \\
35 & \textcolor{gray!30}{0.00} & 0.26 & \textcolor{gray!30}{0.00} \\
36 & \textcolor{gray!30}{0.00} & 0.11 & \textcolor{gray!30}{0.00} \\
37 & 0.01 & 0.04 & 0.01 \\
38 & \textcolor{gray!30}{0.00} & 0.04 & 0.01 \\ 
39 & \textcolor{gray!30}{0.00} & 0.05 & \textcolor{gray!30}{0.00} \\
40 & \textcolor{gray!30}{0.00} & 0.09 & \textcolor{gray!30}{0.00} \\
41 & \textcolor{gray!30}{0.00} & \textcolor{gray!30}{0.00} & 0.18 \\ 
42 & \textcolor{gray!30}{0.00} & \textcolor{gray!30}{0.00} & 0.33 \\ 
43 & 0.02 & \textcolor{gray!30}{0.00} & 0.24 \\ 
44 & \textcolor{gray!30}{0.00} & \textcolor{gray!30}{0.00} & 0.30 \\
45 & \textcolor{gray!30}{0.00} & \textcolor{gray!30}{0.00} & 0.26 \\
46 & 0.04 & 0.01 & 0.11 \\ 
47 & \textcolor{gray!30}{0.00} & 0.05 & 0.11 \\
48 & 0.01 & 0.01 & 0.18 \\
49 & 0.04 & 0.01 & 0.22 \\
50 & 0.04 & \textcolor{gray!30}{0.00} & 0.19 \\
51 & 0.05 & 0.02 & 0.13 \\ 
52 & 0.02 & \textcolor{gray!30}{0.00} & 0.06 \\
53 & 0.01 & \textcolor{gray!30}{0.00} & 0.10 \\
54 & 0.01 & 0.01 & 0.02 \\
55 & 0.04 & \textcolor{gray!30}{0.00} & 0.06 \\
56 & 0.02 & \textcolor{gray!30}{0.00} & 0.24 \\
57 & 0.07 & 0.01 & 0.26 \\
58 & \textcolor{gray!30}{0.00} & \textcolor{gray!30}{0.00} & 0.22 \\
59 & \textcolor{gray!30}{0.00} & \textcolor{gray!30}{0.00} & 0.31 \\
60 & \textcolor{gray!30}{0.00} & \textcolor{gray!30}{0.00} & 0.19 \\ 
\hline 
\end{tabular} 
\label{tab:Psi} 
\end{table}

\section{Appendix 3: Intersectional Multi-Attribute Fairness}\label{app:multi-attr}

\subsection{Constructing the Intersectional Fairness Matrix}

The problem of fairness often gets amplified for individuals belonging to the intersection of two or more protected/sensitive attributes (e.g. `Race', and `Gender') \cite{DBLP:conf/ijcnn/RoyKPN24}. Suppose each node has $A$ sensitive attributes. Attribute $a\in\{1,\dots,A\}$ has $m_a$ groups and one-hot indicator $G^{(a)}\in\mathbb{R}^{n\times m_a}$.
To enforce \emph{intersectional} (joint) parity across the Cartesian product of all attributes as previously done in~\cite{DBLP:conf/fat/RoyH, DBLP:journals/corr/abs-2509-08156}, we build a joint one-hot matrix:
\begin{equation}
G_{\text{int}} \;\in\; \mathbb{R}^{n\times M},\qquad
M \;=\; \prod_{a=1}^{A} m_a,
\end{equation}

whose columns correspond to all joint categories (e.g., \textit{male$\wedge$Asian}, \textit{female$\wedge$White}, \dots). Concretely, each joint column is the elementwise AND (Hadamard product) of one columns taken from each $G^{(a)}$:
\begin{equation}
G_{\text{int}} \;=\; G^{(1)} \,\odot\, G^{(2)} \,\odot\, \cdots \,\odot\, G^{(A)}.
\end{equation}
We then apply proportional centering and drop one redundant column to avoid linear dependence:
\begin{equation}
F_{\text{int}} \;=\; G_{\text{int}} \;-\; \tfrac{1}{n}\,\bm{1}_n \big(\bm{1}_n^\top G_{\text{int}}\big),
\qquad
F_{\text{int}}\in\mathbb{R}^{n\times(M-1)}.
\end{equation}
DFNMF’s fairness penalty becomes
\begin{equation}
\mathcal{R}_{\text{int}}(H) \;=\; \big\| F_{\text{int}}^\top H \big\|_F^2,
\end{equation}
which enforces demographic parity \emph{jointly} over all intersections. Complexity-wise, this adds $O(nk\,M)$ per iteration (for forming $F_{\text{int}}^\top H$). In our SBM study with two attributes (gender: $m_1{=}2$, ethnicity: $m_2{=}5$), $M{=}10$ is modest.

\paragraph{Tiny schematic (two attributes).}
For gender $\{\mathrm{M},\mathrm{F}\}$ and ethnicity $\{\mathrm{A},\mathrm{W},\mathrm{B},\mathrm{C},\mathrm{D}\}$, the $M{=}10$ joint groups are
\[
\begin{aligned}
\{&(\mathrm{M},\mathrm{A}),(\mathrm{M},\mathrm{W}),(\mathrm{M},\mathrm{B}),(\mathrm{M},\mathrm{C}),(\mathrm{M},\mathrm{D}),\\
 &(\mathrm{F},\mathrm{A}),(\mathrm{F},\mathrm{W}),(\mathrm{F},\mathrm{B}),(\mathrm{F},\mathrm{C}),(\mathrm{F},\mathrm{D})\}.
\end{aligned}
\]
Each column of $G_{\text{int}}$ is the indicator of one joint group; $F_{\text{int}}$ is its centered version (with one dropped column).

\subsection{Metrics under Intersectional Fairness}
We evaluate standard utility and fairness metrics:
\begin{itemize}
  \item \textbf{Modularity} $Q$ (higher is better).
  \item \textbf{Intersectional balance} $\bar B_{\text{int}}$ computed over the $M$ joint groups: 
  $\bar B_{\text{int}}=\tfrac{1}{k}\sum_{l=1}^{k} \min_{g\neq g'} \tfrac{|V_g\cap C_l|}{|V_{g'}\cap C_l|}$, where $g,g'$ range over joint categories.
\end{itemize}

\subsection{SBM at Scale: Enforcing Intersectional Parity}\label{app:multi-sbm}

\begin{table}[!hbt]
\centering
\setlength{\tabcolsep}{6pt}   
\renewcommand{\arraystretch}{1.1} 
\caption{\textbf{SBM, intersectional multi-attribute fairness} ($k{=}10$, $\lambda{=}100$). Intersectional uses $M{=}10$ joint groups (gender$\times$ethnicity).}
\label{tab:multi_sbm_intersectional}
\begin{tabular}{l c c c c}
\toprule
\textbf{Attribute} & \textbf{$n$} & M (\#groups) &  $\mathbf{Q}$ & $\mathbf{\bar B}$ \\
\midrule
\multirow{3}{*}{Gender only} & 2K & 2 & 1.000 & 1.000 \\
                             & 5K & 2 & 1.000 & 1.000 \\
                             & 10K & 2 & 1.000 & 1.000 \\
\midrule
\multirow{3}{*}{Ethnicity only} & 2K & 5 & 0.119 & 0.8985 \\
                                & 5K & 5 & 0.124 & 0.9999 \\
                                & 10K & 5 & 0.124 & 1.0000 \\
\midrule
\multirow{3}{*}{Intersectional (G$\times$E)} & 2K & 10 & 0.110 & 0.3522 \\
                                             & 5K & 10 & 0.112 & 0.4263 \\
                                             & 10K & 10 & 0.115 & 0.4507 \\
\bottomrule
\end{tabular}
\end{table}

We compare \emph{single-attribute} fairness (gender-only; ethnicity-only) against \emph{intersectional} fairness (gender$\times$ethnicity) on SBMs with $k{=}10$ clusters and $n\in\{2\mathrm{K},5\mathrm{K},10\mathrm{K}\}$ nodes. We fix $\lambda{=}100$ to isolate the effect of the constraint. For single-attribute runs, $\bar B$ is computed on that attribute; for the intersectional run, $\bar B_{\text{int}}$ is computed on the $M{=}10$ joint groups. The results are illustrated in Table~\ref{tab:multi_sbm_intersectional}.

\paragraph{Discussion}
\begin{itemize}
  \item \textbf{Single-attribute parity}. Gender-only aligns with the planted structure in this SBM (both $Q$ and $\bar B$ near 1). Ethnicity-only parity achieves very high balance with modest $Q{\approx}0.12$, improving or stabilizing as $n$ grows.
  \item \textbf{Intersectional parity}. Enforcing joint parity across $M{=}10$ groups is stricter: $\bar B_{\text{int}}$ is lower than single-attribute balances at small $n$, but \emph{increases with scale} (from $0.35$ at $2\mathrm{K}$ to $0.45$ at $10\mathrm{K}$). The utility cost is controlled (slight $Q$ increase with $n$ from $0.110$ to $0.115$), indicating that larger graphs make intersectional constraints more feasible.
  \item \textbf{Takeaway}. DFNMF can efficiently enforce intersectional demographic parity directly via $F_{\text{int}}$; The soft balanced fairness encoding introduced in Section~\ref{sec: bal_constraint} allows this extension efficiently. The expected trade-off (tighter fairness $\Rightarrow$ lower $Q$) is observed, while scaling the graph improves joint feasibility without tuning $\lambda$.
\end{itemize}

\end{document}